\def\eqref#1{equation~\ref{#1}}
\def\1{\bm{1}}
\DeclareMathAlphabet{\mathsfit}{\encodingdefault}{\sfdefault}{m}{sl}
\SetMathAlphabet{\mathsfit}{bold}{\encodingdefault}{\sfdefault}{bx}{n}
\DeclareMathOperator{\Tr}{Tr}
\newtheorem{theorem}{Theorem}
\newtheorem*{theorem*}{Theorem}
\renewcommand\footnotemark{}
\title{Neural Mechanics: symmetry and broken conservation laws in deep learning dynamics}
\author{Daniel Kunin${\mbox{*}}$, Javier Sagastuy-Brena, Surya Ganguli, Daniel L.K. Yamins, Hidenori Tanaka${\mbox{*}}^\dagger$\\
\thanks{${\mbox{*}}$ Equal contribution. Correspondence to kunin@stanford.edu \& hidenori.tanaka@ntt-research.com} \\
Stanford University\\
$\dagger$ Physics \& Informatics Laboratories, NTT Research, Inc.\\
}
\begin{document}
\maketitle
\begin{abstract}
Understanding the dynamics of neural network parameters during training is one of the key challenges in building a theoretical foundation for deep learning.
A central obstacle is that the motion of a network in high-dimensional parameter space undergoes discrete finite steps along complex stochastic gradients derived from real-world datasets.
We circumvent this obstacle through a unifying theoretical framework based on intrinsic symmetries embedded in a network's architecture that are present for \emph{any} dataset.
We show that any such symmetry imposes stringent geometric constraints on gradients and Hessians, leading to an associated conservation law in the continuous-time limit of stochastic gradient descent (SGD), akin to Noether's theorem in physics.
We further show that finite learning rates used in practice can actually break these symmetry induced conservation laws.
We apply tools from finite difference methods to derive \emph{modified gradient flow}, a differential equation that better approximates the numerical trajectory taken by SGD at finite learning rates. 
We combine modified gradient flow with our framework of symmetries to derive exact integral expressions for the dynamics of certain parameter combinations.
We empirically validate our analytic expressions for learning dynamics on VGG-16 trained on Tiny ImageNet.
Overall, by exploiting symmetry, our work demonstrates that we can analytically describe the learning dynamics of various parameter combinations at finite learning rates and batch sizes for state of the art architectures trained on \emph{any} dataset.
\end{abstract}

\section{Introduction}
\label{sec:intro}

Just like the fundamental laws of classical and quantum mechanics taught us how to control and optimize the physical world for engineering purposes, a better understanding of the laws governing neural network learning dynamics can have a profound impact on the optimization of artificial neural networks.
This raises a foundational question: \emph{what, if anything, can we quantitatively understand about the learning dynamics of large-scale, non-linear neural network models driven by real-world datasets and optimized via stochastic gradient descent with a finite batch size, learning rate, and with or without momentum?}
In order to make headway on this extremely difficult question, existing works have made major simplifying assumptions on the network, such as restricting to identity activation functions \cite{saxe2013exact}, infinite width layers \cite{jacot2018neural}, or single hidden layers \cite{saad1995dynamics}. 
Many of these works have also ignored the complexity introduced by stochasticity and discretization by only focusing on the learning dynamics under gradient flow.
In the present work, we make the first step in an orthogonal direction. 
Rather than introducing unrealistic assumptions on the model or learning dynamics, we uncover restricted, but meaningful, combinations of parameters with simplified dynamics that can be solved exactly without introducing a major assumption (see Fig.~\ref{fig:simple-neuron-dynamics}).
To find the parameter combinations, we use the lens of \emph{symmetry} to show that if the training loss doesn't change under some transformation of the parameters, then the gradient and Hessian for those parameters have associated geometric constraints.
We systematically apply this approach to modern neural networks to derive exact integral expressions and verify our predictions empirically on large scale models and datasets.
We believe our work is the first step towards a foundational understanding of neural network learning dynamics that is not based in simplifying assumptions, but rather the simplifying symmetries embedded in a network's architecture.
Our main contributions are:
\begin{compactenum}
    \item We leverage continuous differentiable symmetries in the loss to unify and generalize geometric constraints on neural network gradients and Hessians (section~\ref{sec:symmetry}).
    \item We prove that each of these differentiable symmetries has an associated conservation law under the learning dynamics of gradient flow (section~\ref{sec:conservation}).
    \item We construct a more realistic continuous model for stochastic gradient descent by modeling weight decay, momentum, stochastic batches, and finite learning rates (section~\ref{sec:continous-model}).
    \item We show that under this more realistic model the conservation laws of gradient flow are broken, yielding simple ODEs governing the dynamics for the previously conserved parameter combinations (section~\ref{sec:dynamics}).
    \item We solve these ODEs to derive exact learning dynamics for the parameter combinations, which we validate empirically on VGG-16 trained on Tiny ImageNet with and without batch normalization (section~\ref{sec:dynamics}).
\end{compactenum}

\begin{wrapfigure}{R!}{0.5\textwidth}
     \centering
     \begin{subfigure}[b]{0.43\textwidth}
         \centering
         \includegraphics[width=\columnwidth]{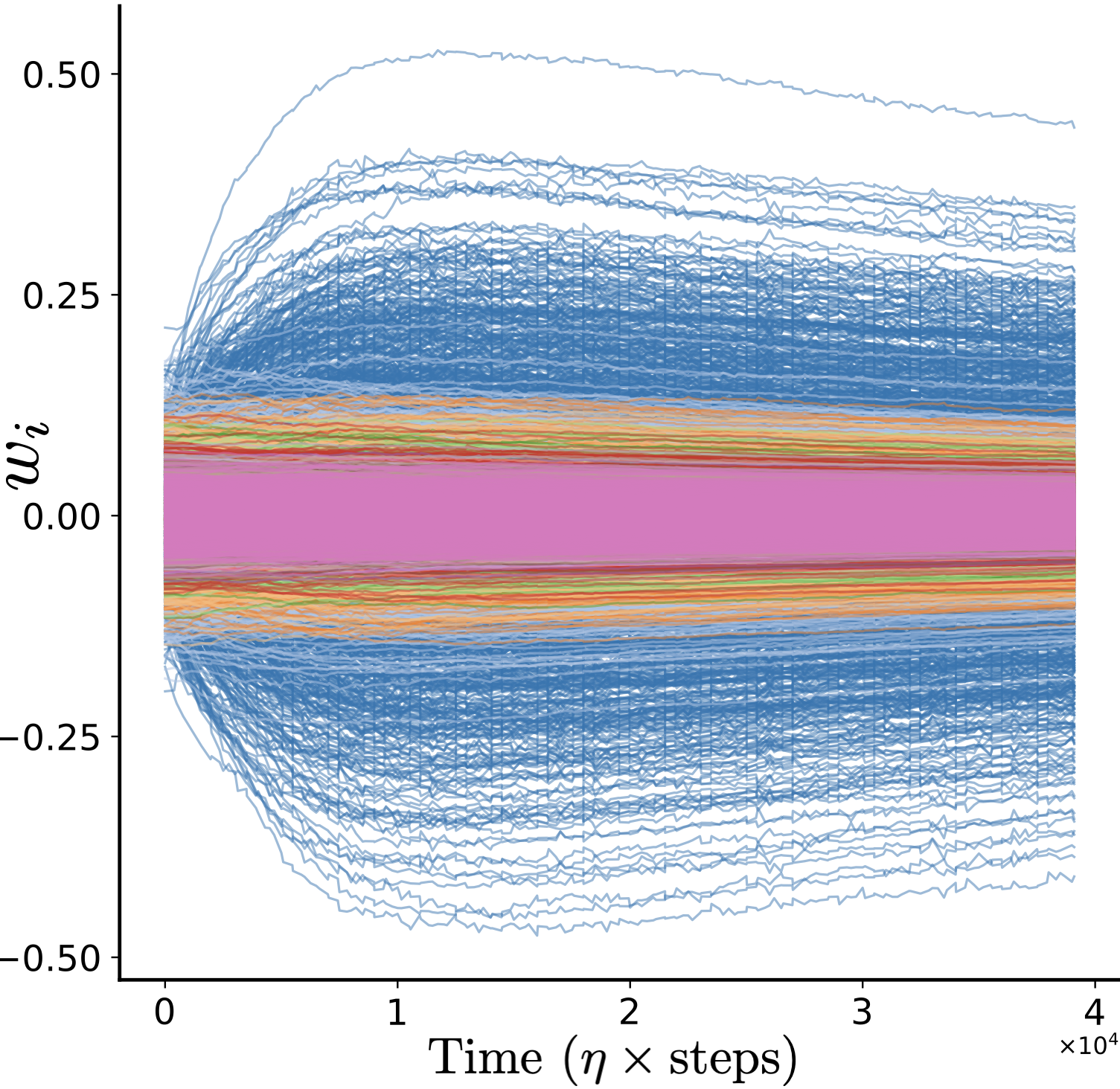}
         \caption{\small{Parameter Dynamics}}
         \label{fig:parameter-dynamics}
     \end{subfigure}
     \hfill
     \begin{subfigure}[b]{0.43\textwidth}
         \centering
         \includegraphics[width=\columnwidth]{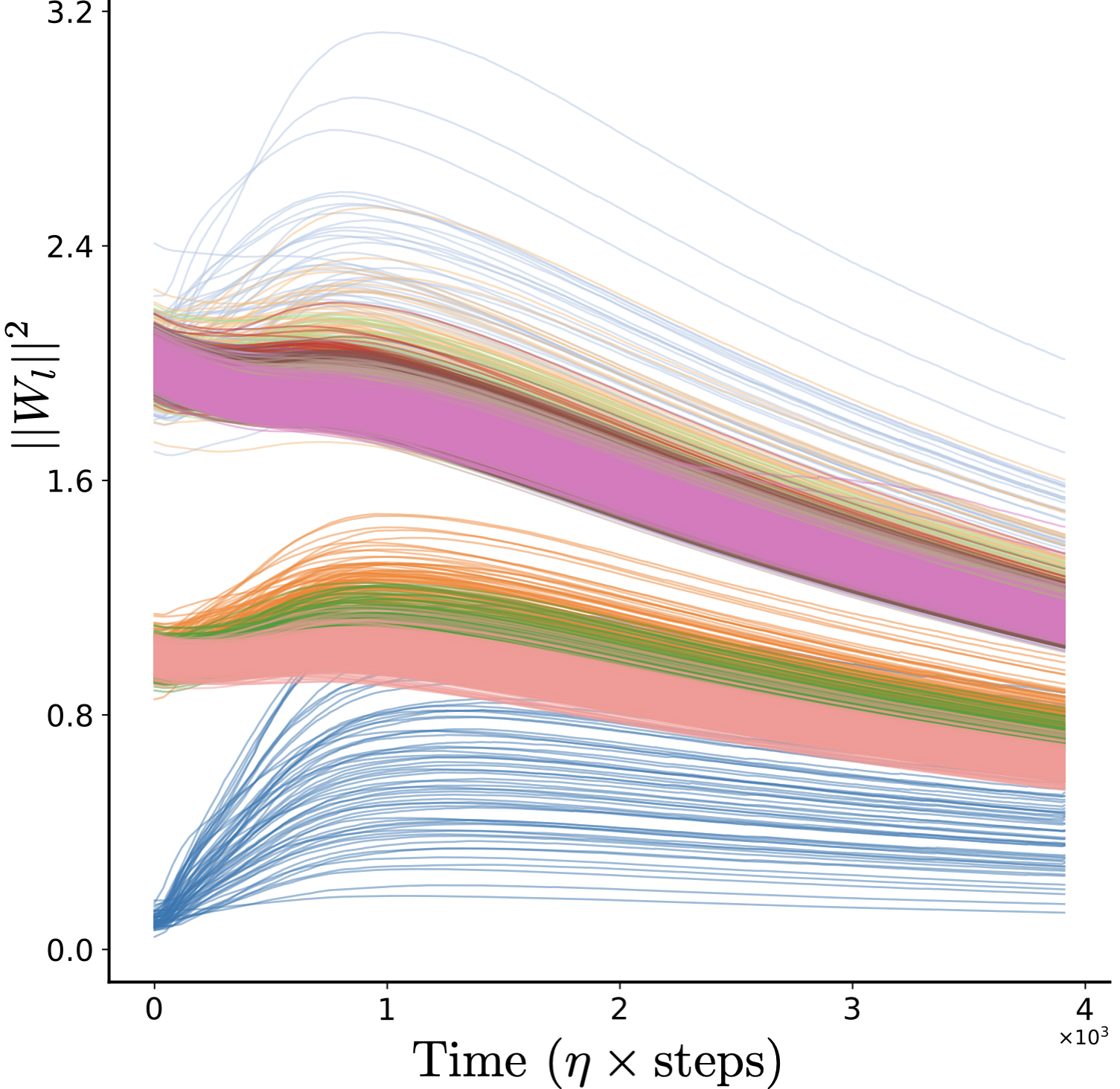}
         \caption{\small{Neuron Dynamics}}
         \label{fig:neuron-dynamics}
     \end{subfigure}
     \hfill
     \begin{subfigure}[b]{0.11\textwidth}
         \centering
         \includegraphics[width=\columnwidth]{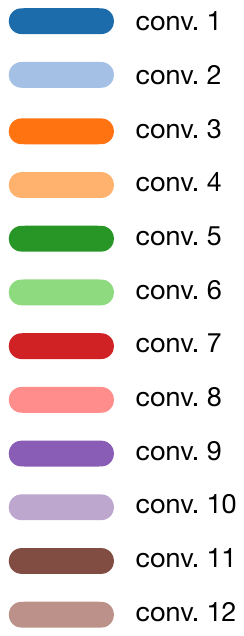}
     \end{subfigure}
    \caption{\textbf{Neuron level dynamics are simpler than parameter dynamics.}
    We plot the per-parameter dynamics (left) and per-channel squared Euclidean norm dynamics (right) for the convolutional layers of a VGG-16 model (with batch normalization) trained on Tiny ImageNet with SGD with learning rate $\eta = 0.1$, weight decay $\lambda = 10^{-4}$, and batch size $S = 256$.
    While the parameter dynamics are noisy and chaotic, the neuron dynamics are smooth and patterned.
    }
    \label{fig:simple-neuron-dynamics}
\end{wrapfigure}

\section{Related work}
The goal of this work is to construct a theoretical framework to better understand the learning dynamics of state-of-the-art neural networks trained on real-world datasets.
Existing works have made progress towards this goal through major simplifying assumptions on the architecture and learning rule.
\cite{saxe2013exact, Saxe2019-eg} and \cite{Lampinen2018-sl} considered linear neural networks with specific orthogonal initializations, deriving exact solutions for the learning dynamics under gradient flow.
The theoretical tractability of linear networks has further enabled analyses on the properties of loss landscapes \cite{kawaguchi2016deep}, convergence \cite{arora2018convergence, du2019width}, and implicit acceleration by overparameterization \cite{arora2018optimization}.
\cite{saad1995dynamics} and \cite{goldt2019dynamics} studied single hidden layer architectures with non-linearities in a student-teacher setup, deriving a set of complex ODEs describing the learning dynamics.
Such shallow neural networks have also catalyzed recent major advances in understanding convergence properties of neural networks \cite{du2018gradient,mei2018mean}.
\cite{jacot2018neural} considered infinitely wide neural networks with non-linearities, demonstrating that the network's prediction becomes linear in its parameters.
This setting allows for an insightful mathematical formulation of the network's learning dynamics as a form of kernel regression where the kernel is defined by the initialization (though see also \cite{Fort2020-hi}). 
\cite{arora2019exact} extended these results to convolutional networks and \cite{lee2019wide} demonstrated how this understanding also allows for predictions of parameter dynamics.

In the present work, we make the first step in an orthogonal direction. 
Instead of introducing unrealistic assumptions, we discover restricted combinations of parameters for which we can find exact solutions, as shown in Fig.~\ref{fig:simple-neuron-dynamics}.
We make this fundamental contribution by constructing a framework harnessing the geometry of the loss shaped by symmetry and realistic continuous equations of learning.

\textbf{Geometry of the loss.} A wide range of literature has discussed constraints on gradients originating from specific architectural building blocks of networks.
For the first part of our work, we simplify, unify, and generalize the literature through the lens of symmetry.

The earliest works understanding the importance of invariances in neural networks come from the loss landscape literature \cite{baldi1989neural} and the characterization of critical points in the presence of explicit regularization \cite{kunin2019loss}.
More recent works have studied implicit regularization originating from linear \cite{arora2018optimization} and homogeneous \cite{du2018algorithmic} activations, finding that gradient geometry plays an important role in constraining the learning dynamics.
A different line of research studying the generalization capacity of networks has noticed similar gradient structures \cite{liang2019fisher}.
Beyond theoretical studies, geometric properties of the gradient and Hessian have been applied to optimize \cite{neyshabur2015path}, interpret \cite{bach2015pixel}, and prune \cite{tanaka2020pruning} neural networks.

Gradient properties introduced by batch \cite{ioffe2015batch}, weight \cite{salimans2016weight} and layer \cite{ba2016layer} normalization have been intensely studied.
\cite{van2017l2, zhang2018three} showed that normalization layers are scale invariant, but have an implicit role in controlling the effective learning rate.
\cite{cho2017riemannian, hoffer2018norm, chiley2019online,li2019exponential, wan2020spherical} have leveraged the scale invariance of batch normalization to understand geometric properties of the learning dynamics.
Most recently, \cite{li2020reconciling} studied the role of gradient noise in reconciling the empirical dynamics of batch normalization with the theoretical predictions given by continuous models of gradient descent. 

\textbf{Equations of learning.}
To make experimentally testable predictions on learning dynamics, we introduce a continuous model for stochastic gradient descent (SGD).
There exists a large body of works studying this subject using stochastic differential equations (SDEs) in the continuous-time limit \cite{mandt2015continuous, mandt2017stochastic, li2017stochastic, smith2017bayesian, chaudhari2018stochastic, jastrzkebski2017three, zhu2018anisotropic, an2018stochastic}. 
Each of these works involves making specific assumptions on the loss or the noise in order to derive stationary distributions.
More careful treatment of stochasticity led to fluctuation dissipation relationships at steady state without such assumptions \cite{yaida2018fluctuation}.
In our analysis, we apply a more recent approach \cite{li2017stochastic, barrett2020implicit}, inspired by finite difference methods, that augments SDE model with higher-order terms to account for the effect of a finite step size and curvature in the learning trajectory.

\section{Symmetries in the Loss shape Gradient and Hessian Geometries}
\label{sec:symmetry}

\begin{wrapfigure}{R!}{0.5\textwidth}
     \centering
     \begin{subfigure}[b]{0.32\textwidth}
         \centering
         \includegraphics[width=\columnwidth]{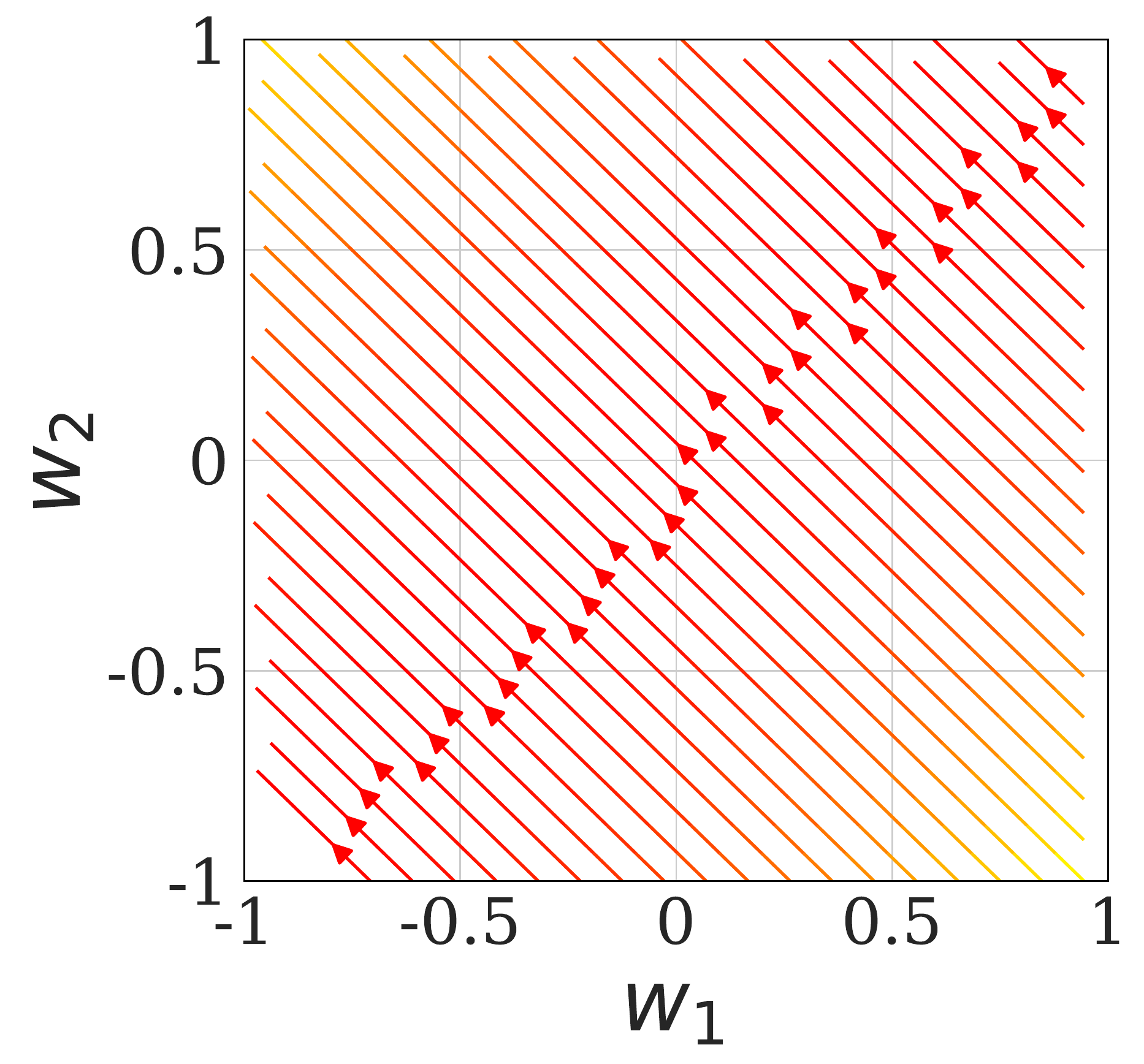}
         \caption{Translation}
         \label{fig:visualizing-translation-sym}
     \end{subfigure}
     \hfill
     \begin{subfigure}[b]{0.32\textwidth}
         \centering
         \includegraphics[width=\columnwidth]{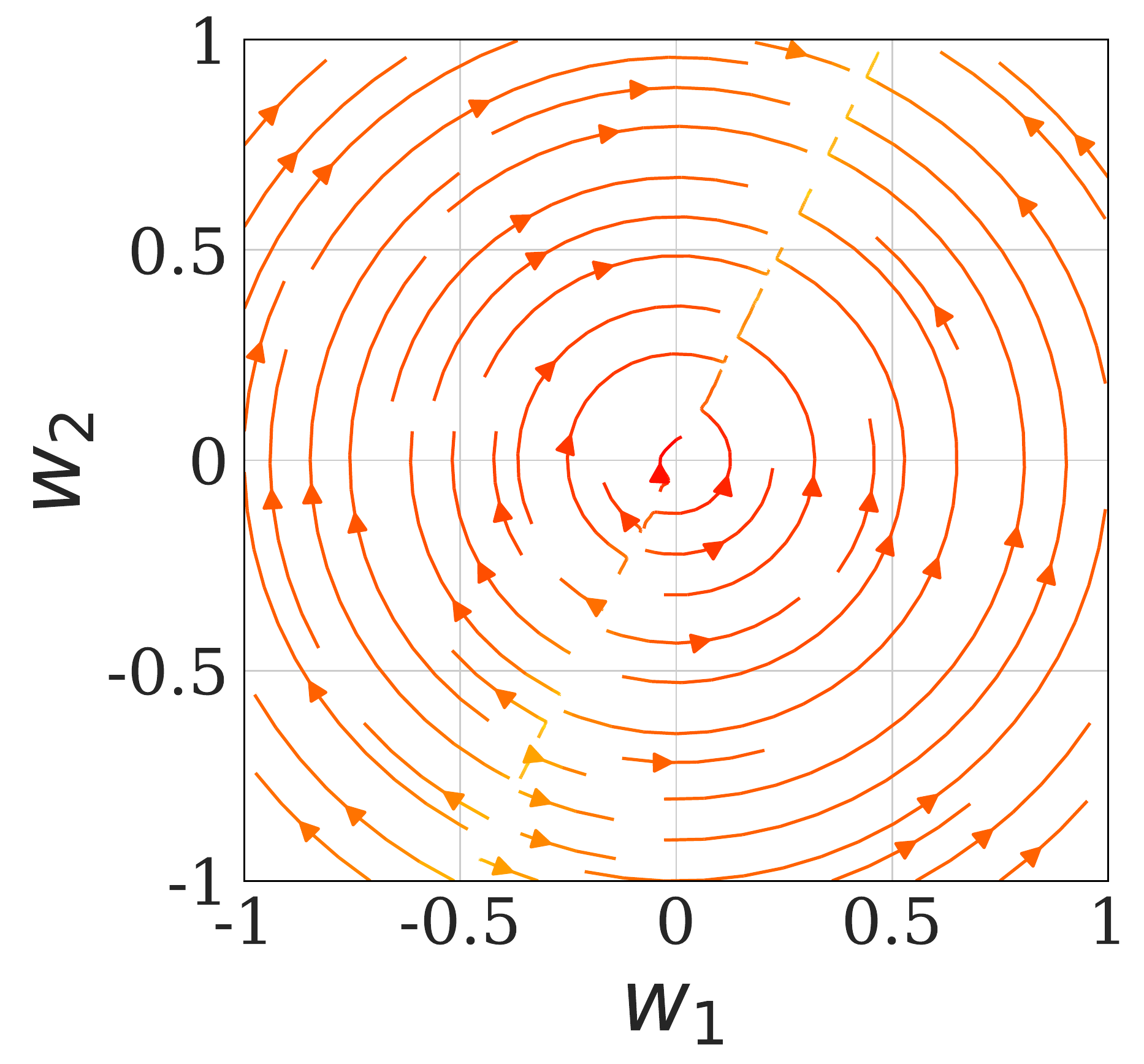}
         \caption{Scale}
         \label{fig:visualizing-scale-sym}
     \end{subfigure}
     \hfill
     \begin{subfigure}[b]{0.32\textwidth}
         \centering
         \includegraphics[width=\columnwidth]{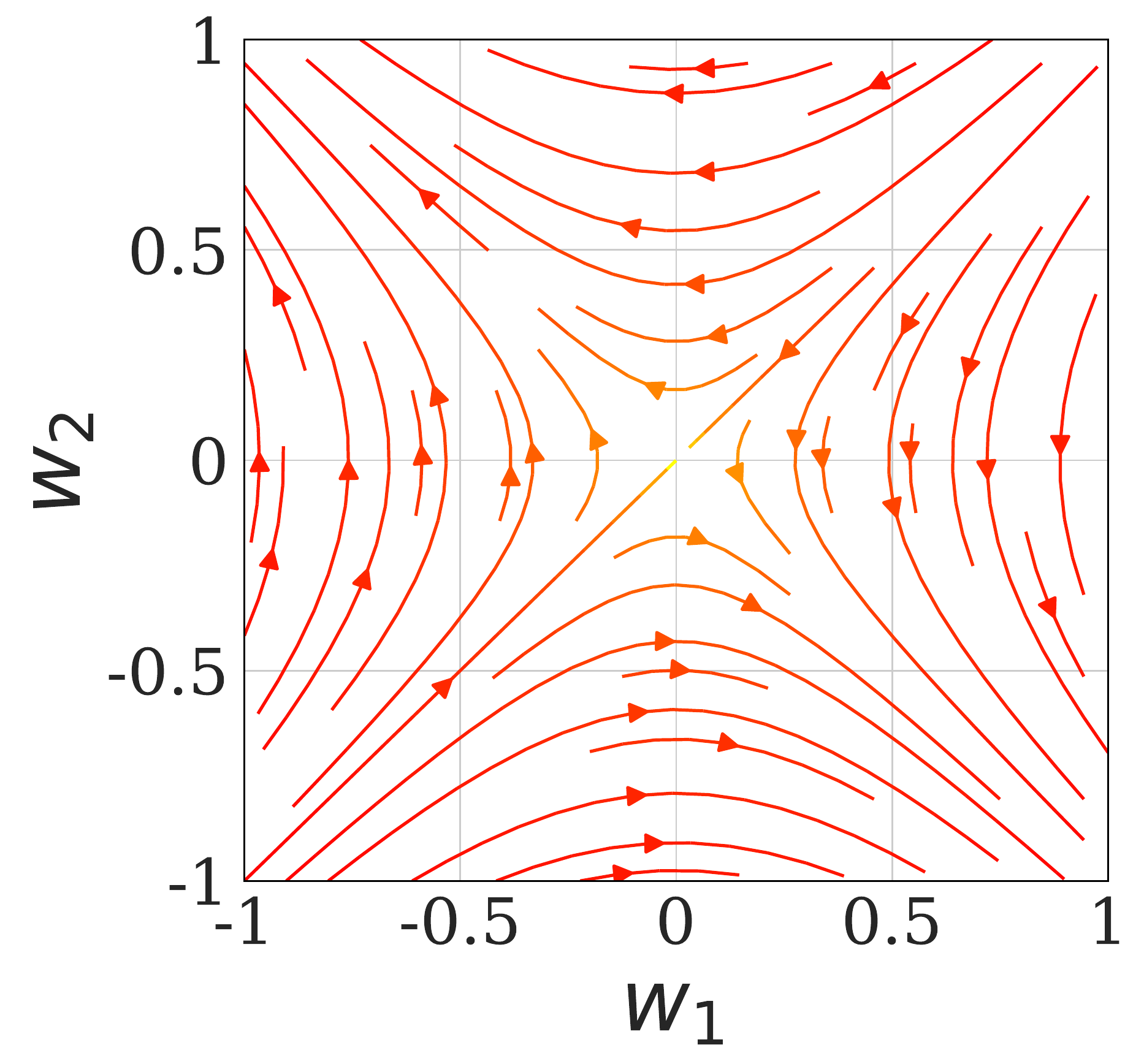}
         \caption{Rescale}
         \label{fig:visualizing-rescale-sym}
     \end{subfigure}
        \caption{\textbf{Visualizing symmetry.}
        We visualize the vector fields associated with simple network components that have translation, scale, and rescale symmetry.
        In (a) we consider the vector field associated with a neuron $\sigma\left(\begin{bmatrix}
        w_1 & w_2\end{bmatrix}^\intercal x\right)$ where $\sigma$ is the softmax function.
        In (b) we consider the vector field associated with a neuron $\text{BN}\left(\begin{bmatrix}
        w_1 & w_2\end{bmatrix}\begin{bmatrix}
        x_1 & x_2\end{bmatrix}^\intercal\right)$ where $\text{BN}$ is the batch normalization function.
        In (c) we consider the vector field associated with a linear path $w_2w_1 x$.
        }
        \label{fig:visualizing-symmetries}
\end{wrapfigure}

While we initialize neural networks randomly, their gradients and Hessians at all points in training, no matter the loss or dataset, obey certain geometric constraints.
Some of these constraints have been noticed previously as a form of implicit regularization, while others have been leveraged algorithmically in applications from network pruning to interpretability.
Remarkably, all these geometric constraints can be understood as consequences of numerous differentiable symmetries in the loss introduced by neural network architectures. 
A set of parameters observes a differentiable symmetry in the loss if the loss doesn't change under a certain differentiable transformation of these parameters.
This invariance introduces associated geometric constraints on the gradient and Hessian.

Consider a function $f(\theta)$ where $\theta \in \mathbb{R}^m$. 
This function possesses a differentiable symmetry if it is invariant under the differentiable action $\psi$ of a group $G$ on the parameter vector $\theta$, i.e., if $\theta \mapsto \psi(\theta, \alpha)$ where $\alpha \in G$, then $F\left( \theta, \alpha \right) = f \left( \psi (\theta, \alpha) \right) =  f(\theta)$ for all $(\theta, \alpha)$.
The existence of a symmetry enforces a geometric structure on the gradient, $\nabla F$.
Evaluating the gradient at the identity element $I$ of $G$ (so that for all $\theta$,
$\psi(\theta, I) = \theta$) yields the result, 
\begin{equation}
    \label{eq:gradient-geometry}
    \langle\nabla f, \partial_\alpha \psi\vert_{\alpha=I} \rangle = 0,
\end{equation}
where $\langle\;,\;\rangle$ denotes the inner product.
This equality implies that the gradient $\nabla f$ is perpendicular to the vector field $\partial_\alpha \psi\vert_{\alpha=I}$ that generates the symmetry, for all $\theta$.
The symmetry also enforces a geometric structure on the Hessian, $\mathbf{H}F$.
Evaluating the Hessian at the identity element yields the result,
\begin{equation}
    \label{eq:hessian-geometry}
    \mathbf{H} f [\partial_\theta \psi\vert_{\alpha=I}] \partial_\alpha \psi\vert_{\alpha=I} + [\partial_\theta \partial_\alpha \psi\vert_{\alpha=I}]\nabla f = 0,
\end{equation}
which constrains the Hessian $\mathbf{H} f$. 
See appendix~\ref{appendix:symmetry-conservation} for the derivation of these properties and other geometric consequences of symmetry.

We will now consider the specific setting of a neural network parameterized by $\theta \in \mathbb{R}^m$, the training loss $\mathcal{L}(\theta)$, and three families of symmetries (translation, scale, and rescale) that commonly appear in modern network architectures.

\textbf{Translation symmetry.}
Translation symmetry is defined by the group $\mathbb{R}$ and action $\theta \mapsto \theta + \alpha\mathbb{1}_{\mathcal{A}}$ where $\mathbb{1}_{\mathcal{A}}$ is the indicator vector for some subset $\mathcal{A}$ of the parameters $\{\theta_1, 
\dots, \theta_m\}$.
The loss function $\mathcal{L}$ thus possesses translation symmetry if $\mathcal{L}(\theta) = \mathcal{L}(\theta + \alpha\mathbb{1}_\mathcal{A})$ for all $\alpha \in \mathbb{R}$.
Such a symmetry in turn implies the loss gradient $\partial_\theta \mathcal{L} = g$ is orthogonal to the indicator vector $\partial_\alpha \psi\vert_{\alpha=I} = \mathbb{1}_{\mathcal{A}}$,
\begin{equation}
    \label{eq:tanslation-gradient}
    \langle g, \mathbb{1}_\mathcal{A}\rangle = 0,
\end{equation}
and that the Hessian matrix $H = \partial^2_\theta \mathcal{L}$ has the indicator vector in its kernel,
\begin{equation}
    \label{eq:translation-hessian}
    H\mathbb{1}_\mathcal{A} = 0.
\end{equation}
\emph{Softmax function.}
Any network using the softmax function gives rise to translation symmetry for the parameters immediately preceding the function. 
Let $z = Wx + b$ be the input to the softmax function such that $\sigma(z)_i = \frac{e^{z_i}}{\sum_{j}e^{z_j}}$.
Notice that shifting any column of the weight matrix $w_i$ or the bias vector $b$ by a real constant has no effect on the output of the softmax as the shift factors from both the numerator and denominator canceling its effect. 
Thus, the loss function is invariant w.r.t. this translation, yielding the gradient constraints $\langle \frac{\partial \mathcal{L}}{\partial w_i}, \mathbb{1}\rangle = \langle \frac{\partial \mathcal{L}}{\partial b}, \mathbb{1}\rangle = 0$, visualized in Fig.~\ref{fig:visualizing-symmetries} for the toy model where $w_i \in \mathbb{R}^2$.

\textbf{Scale symmetry.}
Scale symmetry is defined by the group $\mathrm{GL}^+_1(\mathbb{R})$ and action $\theta \mapsto \alpha_\mathcal{A} \odot \theta$ where $\alpha_\mathcal{A} = \alpha \mathbb{1}_\mathcal{A} + \mathbb{1}_\mathcal{A^\mathsf{c}}$. 
The loss function possesses scale symmetry if $\mathcal{L}(\theta) = \mathcal{L}(\alpha_\mathcal{A} \odot \theta)$ for all $\alpha \in \mathrm{GL}^+_1(\mathbb{R})$.
This symmetry immediately implies the loss gradient is everywhere perpendicular to the parameter vector itself $\partial_\alpha \psi \vert_{\alpha=I} = \theta \odot \mathbb{1}_{\mathcal{A}} = \theta_{\mathcal{A}}$,
\begin{equation}
    \label{eq:scale-gradient}
    \langle g, \theta_\mathcal{A}\rangle = 0,
\end{equation}
and relates to the Hessian matrix, where $[\partial_\alpha \partial_\theta \psi \vert_{\alpha=I}]\partial_\theta \mathcal{L} = \text{diag}(\mathbb{1}_{\mathcal{A}})g = g_{\mathcal{A}}$, as
\begin{equation}
    \label{eq:scale-hessian}
    H\theta_\mathcal{A} + g_\mathcal{A} = 0.
\end{equation}
\emph{Batch normalization.}
Batch normalization leads to scale invariance during training.  
Let $z = w^\intercal x + b$ be the input to a neuron with batch normalization such that $\text{BN}(z) = \tfrac{z - \text{E}[z]}{\sqrt{\text{Var}(z)}}$ where $\text{E}[z]$ is the sample mean and $\text{Var}(z)$ is the sample variance given a batch of data.
Notice that scaling $w$ and $b$ by a non-zero real constant has no effect on the output of the batch normalization as it factors from $z$, $\text{E}[z]$, and $\text{Var}(z)$ canceling its effect.
Thus, these parameters observe scale symmetry in the loss and their gradients satisfy $\left\langle \frac{\partial \mathcal{L}}{\partial w}, w \right\rangle +  \left\langle\frac{\partial \mathcal{L}}{\partial b}, b\right\rangle = 0$, as has been previously noted by \cite{ioffe2015batch, van2017l2}, and visualized in Fig.~\ref{fig:visualizing-symmetries} for the toy model where $w,b \in \mathbb{R}$.

\textbf{Rescale symmetry.}
Rescale symmetry is defined by the group $\mathrm{GL}^+_1(\mathbb{R})$ and action $\theta \mapsto \alpha_{\mathcal{A}_1} \odot \alpha^{-1}_{\mathcal{A}_2} \odot \theta$ where $\mathcal{A}_1$ and $\mathcal{A}_2$ are two disjoint sets of parameters. 
The loss function possesses rescale symmetry if $\mathcal{L}(\theta) = \mathcal{L}(\alpha_{\mathcal{A}_1} \odot \alpha^{-1}_{\mathcal{A}_2} \odot \theta)$ for all $\alpha \in \mathrm{GL}^+_1(\mathbb{R})$.
This symmetry immediately implies the loss gradient is everywhere perpendicular to the sign inverted parameter vector
$\partial_\alpha \psi\vert_{\alpha=I} = \theta_{\mathcal{A}_1} - \theta_{\mathcal{A}_2} = \theta \odot (\mathbb{1}_{\mathcal{A}_1} - \mathbb{1}_{\mathcal{A}_2})$,
\begin{equation}
    \label{eq:rescale-gradient}
    \langle g, \theta_{\mathcal{A}_1} - \theta_{\mathcal{A}_2} \rangle = 0
\end{equation}
and relates to the Hessian matrix, where $[\partial_\alpha \partial_\theta \psi\vert_{\alpha=I}]\partial_\theta \mathcal{L} = \text{diag}(\mathbb{1}_{\mathcal{A}_1} - \mathbb{1}_{\mathcal{A}_2})g = g_{\mathcal{A}_1} - g_{\mathcal{A}_2}$, as
\begin{equation}
    \label{eq:rescale-hessian}
    H (\theta_{\mathcal{A}_1} - \theta_{\mathcal{A}_2}) + g_{\mathcal{A}_1} - g_{\mathcal{A}_2} =0.
\end{equation}
\emph{Homogeneous activation.}
For networks with continuous, homogeneous activation functions $\phi(z) = \phi'(z)z$ (e.g. ReLU, Leaky ReLU, linear), this symmetry emerges at every hidden neuron by considering all incoming and outgoing parameters to the neuron.
For example, consider a hidden neuron with ReLU activation $\phi(z) = \max\{0,z\}$, such that $w_2\phi(w_1^\intercal x + b)$ is the computational path through this neuron.
Scaling $w_1$ and $b$ by a real constant and $w_2$ by its inverse has no effect on the computational path as the constants can be passed through the ReLU activation canceling their effects.
Thus, these parameters observe rescale symmetry and their gradients satisfy $\left\langle \frac{\partial \mathcal{L}}{\partial w_1}, w_1 \right\rangle +  \left\langle\frac{\partial \mathcal{L}}{\partial b}, b\right\rangle - \left\langle \frac{\partial \mathcal{L}}{\partial w_2}, w_2 \right\rangle = 0$, as has been previously noted by \cite{du2018algorithmic,liang2019fisher, tanaka2020pruning}, and visualized in Fig.~\ref{fig:visualizing-symmetries} for the toy model where $w_1,w_2 \in \mathbb{R}$ and $b = 0$.

\section{Symmetry leads to Conservation Laws Under Gradient Flow}
\label{sec:conservation}

We now explore how geometric constraints on gradients and Hessians, arising as a consequence of symmetry, impact the learning dynamics given by stochastic gradient descent (SGD).
We will consider a model parameterized by $\theta$, a training dataset $\{x_{1}, \dots, x_{N}\}$ of size $N$, and a training loss $\mathcal{L}(\theta) = \frac{1}{N}\sum_{i=1}^N\ell(\theta, x_i)$ with corresponding gradient $g(\theta) = \frac{\partial \mathcal{L}}{\partial\theta}$.
\begin{wrapfigure}{R!}{0.5\textwidth}
     \centering
     \begin{subfigure}[b]{0.32\textwidth}
         \centering
         \includegraphics[width=\columnwidth]{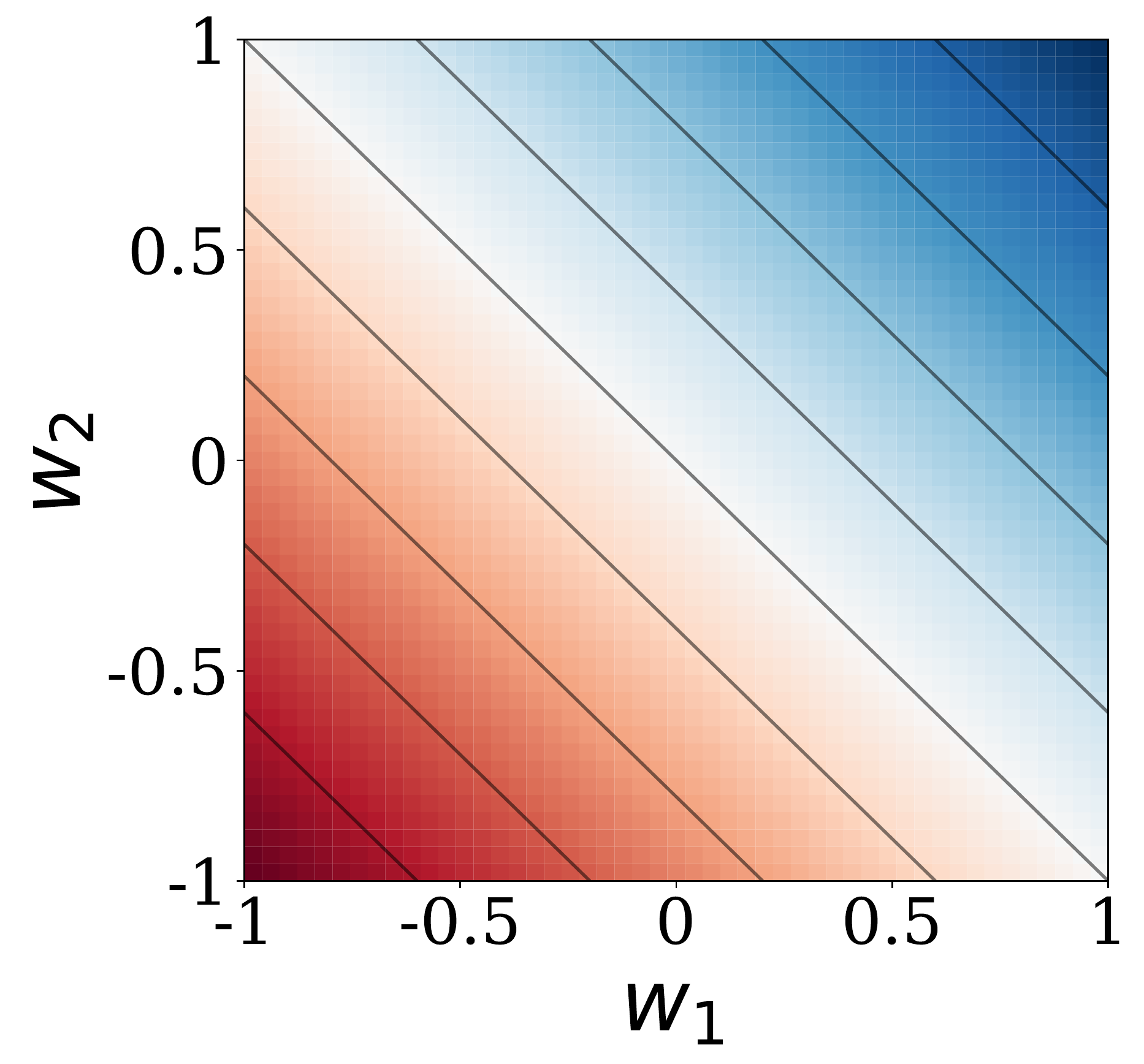}
         \caption{Translation}
         \label{fig:visualizing-translation-cons}
     \end{subfigure}
     \hfill
     \begin{subfigure}[b]{0.32\textwidth}
         \centering
         \includegraphics[width=\columnwidth]{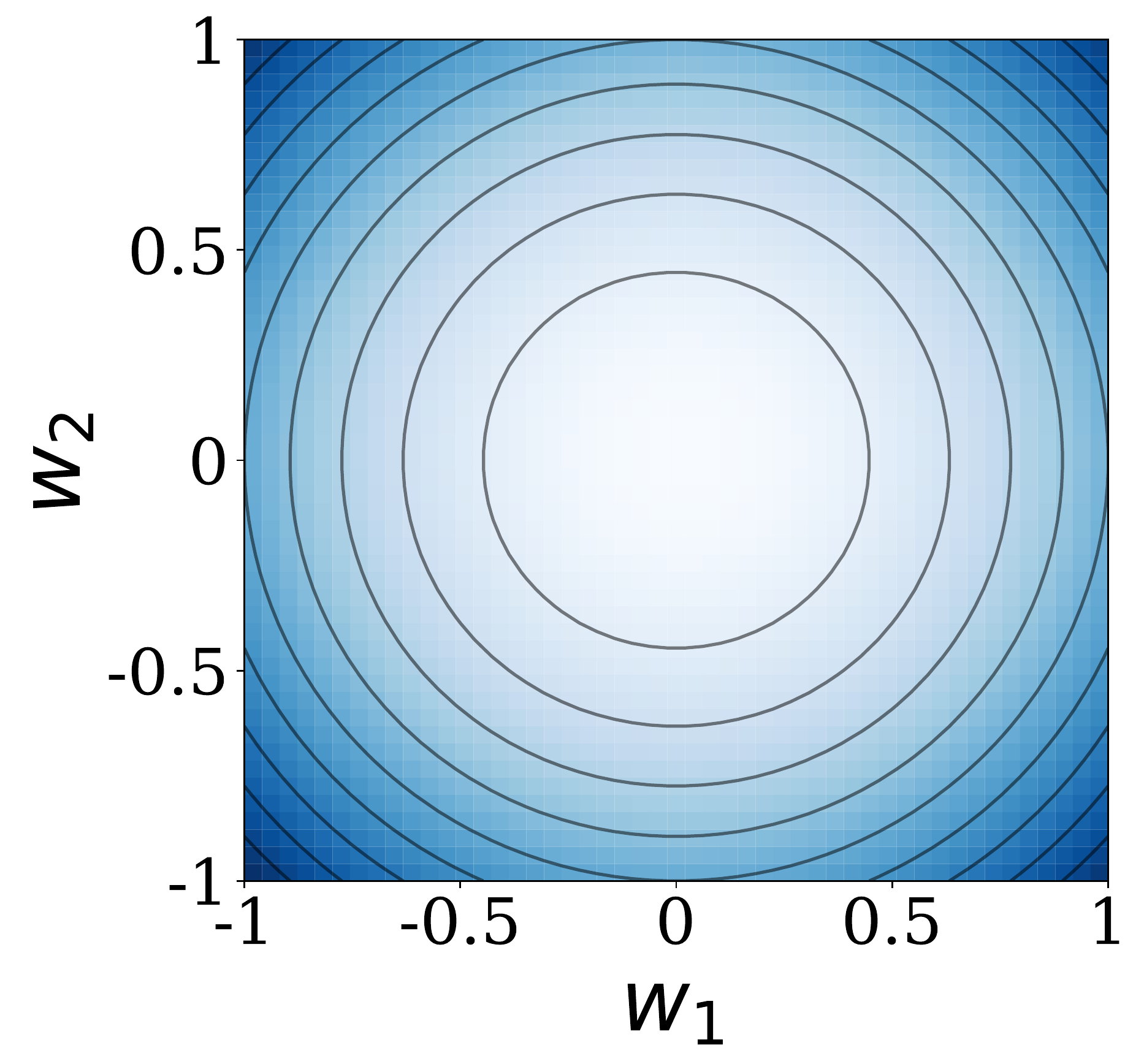}
         \caption{Scale}
         \label{fig:visualizing-scale-cons}
     \end{subfigure}
     \hfill
     \begin{subfigure}[b]{0.32\textwidth}
         \centering
         \includegraphics[width=\columnwidth]{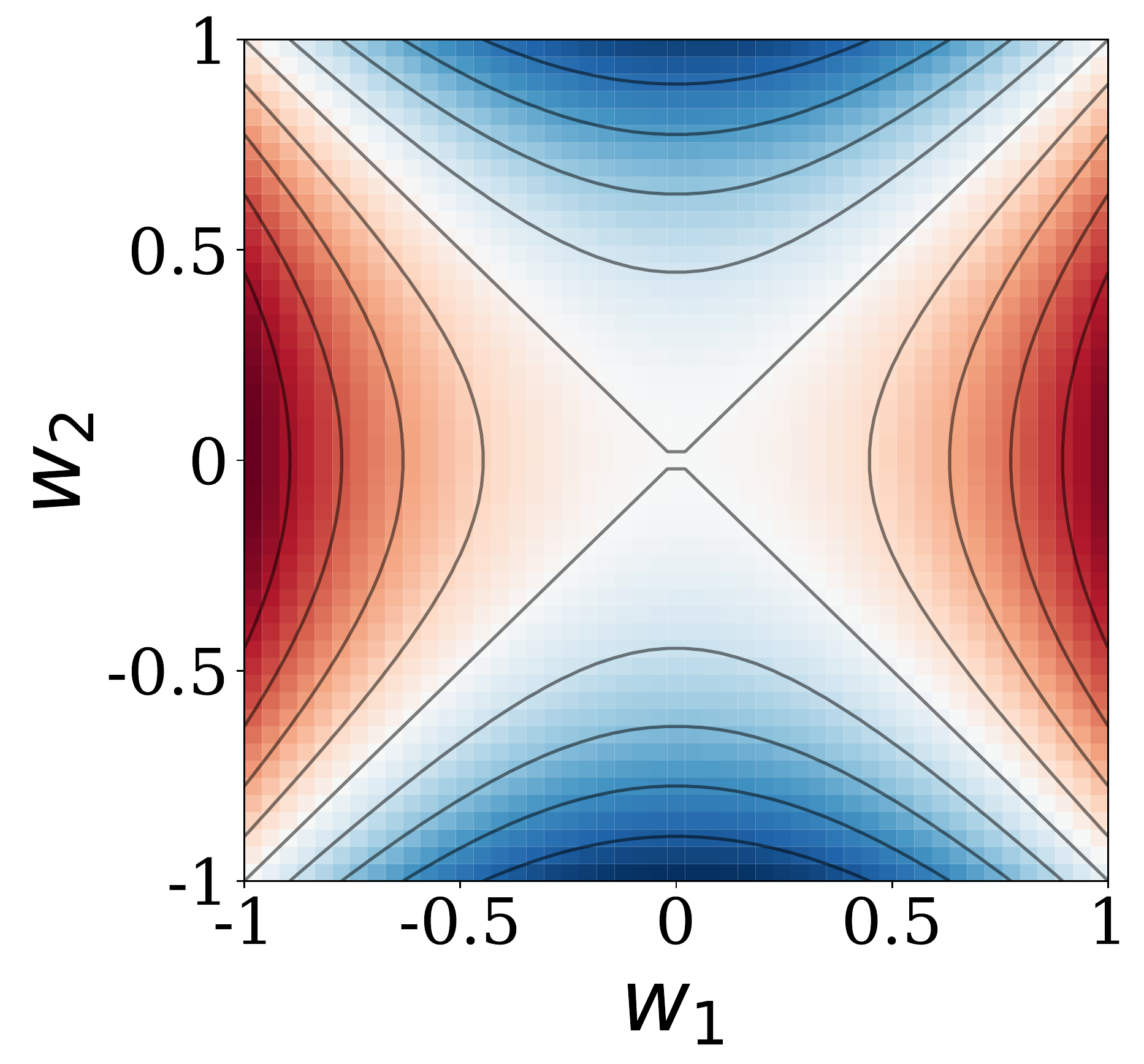}
         \caption{Rescale}
         \label{fig:visualizing-rescale-cons}
     \end{subfigure}
        \caption{\textbf{Visualizing conservation.}
        Associated with each symmetry is a conserved quantity constraining the gradient flow dynamics to a surface. 
        For translation symmetry (a) the flow is constrained to a hyperplane where the intercept is conserved.
        For scale symmetry (b) the flow is constrained to a sphere where the radius is conserved.
        For rescale symmetry (c) the flow is constrained to a hyperbola where the axes are conserved.
        The color represents the value of the conserved quantity, where blue is positive and red is negative, and the black lines are level sets.
        }
        \label{fig:visualizing-conservation}
\end{wrapfigure}

The gradient descent update with learning rate $\eta$ is $\theta^{(n+1)} = \theta^{(n)} - \eta g(\theta^{(n)})$, which is a forward Euler discretization with step size $\eta$ of the ordinary differential equation (ODE)
\begin{equation}
    \label{eq:gradient-flow}
    \frac{d\theta}{dt} = -g(\theta).
\end{equation}
In the limit as $\eta \to 0$, gradient descent exactly matches the dynamics of this ODE, which is commonly referred to as gradient flow \cite{kushner2003stochastic}.
Equipped with a continuous model for the learning dynamics, we now ask how do the dynamics interact with the geometric properties introduced by symmetries?

\textbf{Symmetry leads to conservation.}
Strikingly similar to Noether's theorem, which describes a fundamental relationship between symmetry and conservation for physical systems governed by Lagrangian dynamics, here we show that symmetries of a network architecture have corresponding conserved quantities through training under gradient flow.
\begin{theorem}\textbf{Symmetry and conservation laws in neural networks.}
\label{theorem:symmetry-conservation}
Every differentiable symmetry $\psi(\alpha, \theta)$ of the loss that satisfies $\langle \theta, [\partial_{\alpha} \partial_{\theta} \psi\vert_{\alpha=I}] g(\theta) \rangle = 0$ has the corresponding conservation law, 
\begin{equation}
    \label{eq:general-conservation}
    \tfrac{d}{dt}\langle \theta, \partial_\alpha \psi\vert_{\alpha=I} \rangle = 0,
\end{equation}
through learning under gradient flow.
\end{theorem}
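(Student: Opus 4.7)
The natural candidate for the conserved quantity is $Q(\theta) := \langle \theta, \partial_\alpha \psi\vert_{\alpha=I} \rangle$ itself, so the plan is simply to differentiate it along the gradient flow and show that both of the resulting terms vanish, each for a different structural reason already established in Section~\ref{sec:symmetry}. Concretely, I would apply the product rule to get
\begin{equation*}
    \frac{dQ}{dt} = \Bigl\langle \tfrac{d\theta}{dt}, \partial_\alpha \psi\vert_{\alpha=I}\Bigr\rangle + \Bigl\langle \theta, \tfrac{d}{dt}\bigl(\partial_\alpha \psi\vert_{\alpha=I}\bigr) \Bigr\rangle,
\end{equation*}
and then substitute $d\theta/dt = -g(\theta)$ from the gradient flow ODE~(\ref{eq:gradient-flow}).

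The first term becomes $-\langle g(\theta), \partial_\alpha \psi\vert_{\alpha=I}\rangle$, which is exactly zero by the gradient geometry identity~(\ref{eq:gradient-geometry}) derived from the symmetry of the loss. For the second term, I would apply the chain rule to $\partial_\alpha\psi\vert_{\alpha=I}$ as a function of $\theta$, giving $[\partial_\theta\partial_\alpha\psi\vert_{\alpha=I}](d\theta/dt) = -[\partial_\theta\partial_\alpha\psi\vert_{\alpha=I}]\,g(\theta)$. Using equality of mixed partials (valid because $\psi$ is differentiable, and in practice smooth, in both arguments) I would rewrite this as $-[\partial_\alpha\partial_\theta\psi\vert_{\alpha=I}]\,g(\theta)$, so the second term equals $-\langle\theta, [\partial_\alpha\partial_\theta\psi\vert_{\alpha=I}]\,g(\theta)\rangle$, which vanishes by the hypothesis of the theorem. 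Summing gives $dQ/dt=0$, which is the claimed conservation law~(\ref{eq:general-conservation}).

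There is no real obstacle here; the result is essentially a direct bookkeeping exercise once one notices that the candidate conserved quantity is $\langle\theta,\partial_\alpha\psi\vert_{\alpha=I}\rangle$ and that symmetry provides exactly the two orthogonality conditions needed. The only subtlety worth flagging in the write-up is justifying the interchange $\partial_\theta\partial_\alpha\psi = \partial_\alpha\partial_\theta\psi$ (Clairaut/Schwarz), and noting that the extra hypothesis $\langle\theta,[\partial_\alpha\partial_\theta\psi\vert_{\alpha=I}]g(\theta)\rangle=0$ is not an ad hoc assumption but is automatically satisfied in all three families from Section~\ref{sec:symmetry}: for translation $\partial_\alpha\partial_\theta\psi\equiv 0$, while for scale and rescale $\partial_\alpha\partial_\theta\psi\vert_{\alpha=I}$ is a diagonal indicator matrix so the hypothesis reduces to the gradient constraints (\ref{eq:scale-gradient}) and (\ref{eq:rescale-gradient}), respectively. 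I would briefly verify each of these three specializations as a sanity check, recovering conservation of the sum $\langle\theta,\mathbb{1}_{\mathcal{A}}\rangle$ (translation), the squared norm $\|\theta_{\mathcal{A}}\|^2$ (scale), and the squared-norm difference $\|\theta_{\mathcal{A}_1}\|^2-\|\theta_{\mathcal{A}_2}\|^2$ (rescale), matching the geometric picture in Fig.~\ref{fig:visualizing-conservation}.
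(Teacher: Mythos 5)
Your proposal is correct and is essentially the paper's own argument: the paper projects the gradient-flow ODE onto the generator $\partial_\alpha \psi\vert_{\alpha=I}$ and factors the resulting inner product by the product and chain rules, which is exactly your product-rule decomposition of $\tfrac{d}{dt}\langle\theta,\partial_\alpha\psi\vert_{\alpha=I}\rangle$, with the first term killed by equation~(\ref{eq:gradient-geometry}) and the second by the hypothesis $\langle\theta,[\partial_\alpha\partial_\theta\psi\vert_{\alpha=I}]g(\theta)\rangle=0$. Your verification of the hypothesis for the translation, scale, and rescale cases likewise matches the remarks following the paper's proof in appendix~\ref{appendix:conservation}.
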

To prove Theorem~\ref{theorem:symmetry-conservation}, consider projecting the gradient flow learning dynamics in equation~(\ref{eq:gradient-flow}) onto the generator vector field $\partial_\alpha \psi\vert_{\alpha=I}$ associated with a symmetry.
As shown in section~\ref{sec:symmetry}, the gradient of the loss $g(\theta)$ is always perpendicular to the vector field $\partial_\alpha \psi\vert_{\alpha=I}$.
Thus, the projection yields a differential equation, which can be simplified to equation~(\ref{eq:general-conservation}).
See appendix~\ref{appendix:conservation} for a complete derivation.

Application of this general theorem to the translation, scale, and rescale symmetries, identified in section~\ref{sec:symmetry}, yields the following conservation law of learning,
\begin{align}
    \label{eq:translation-conservation}
    \textbf{Translation:}&\quad\langle \theta_\mathcal{A}(t), \mathbb{1} \rangle = \langle \theta_\mathcal{A}(0), \mathbb{1} \rangle\\
    \label{eq:scale-conservation}
    \textbf{Scale:}&\quad|\theta_\mathcal{A}(t)|^2 = |\theta_\mathcal{A}(0)|^2\\
    \label{eq:rescale-conservation}
    \textbf{Rescale:}&\quad|\theta_{\mathcal{A}_1}(t)|^2 - |\theta_{\mathcal{A}_2}(t)|^2 = |\theta_{\mathcal{A}_1}(0)|^2 - |\theta_{\mathcal{A}_2}(0)|^2
\end{align}
Each of these equations define a conserved constant of learning through training.
For parameters with translation symmetry, their sum ($\langle \theta_\mathcal{A}(t), \mathbb{1} \rangle$) is conserved, effectively constraining their dynamics to a hyperplane. 
For parameters with scale symmetry, their euclidean norm ($|\theta_\mathcal{A}(t)|^2$) is conserved, effectively constraining their dynamics to a sphere \cite{ioffe2015batch, van2017l2}. 
For parameters with rescale symmetry, their difference in squared euclidean norm ($|\theta_{\mathcal{A}_1}(t)|^2 - |\theta_{\mathcal{A}_2}(t)|^2$) is conserved, effectively constraining their dynamics to a hyperbola \cite{du2018algorithmic}.
In Fig.~\ref{fig:visualizing-conservation} we visualize the level sets of these conserved quantities for the toy models discussed in Fig.~\ref{fig:visualizing-symmetries}.

\section{A Realistic Continuous Model for Stochastic Gradient Descent}
\label{sec:continous-model}

In section~\ref{sec:conservation} we combined the geometric constraints introduced by symmetries with gradient flow to derive conservation laws for simple combinations of parameters during training.
However, empirically we know these laws are broken, as demonstrated in Fig.~\ref{fig:simple-neuron-dynamics}.  
What causes this discrepancy?
Gradient flow is too simple of a continuous model for realistic SGD training.
It fails to incorporate the effect of weight decay introduced by explicit regularization, momentum introduced by commonly used hyperparameters, stochasticity introduced by random batches, and discretization introduced by a finite learning rate.
Here, we construct a more realistic continuous model for stochastic gradient descent.

\textbf{Modeling weight decay.}
Explicit regularization through the addition of an $L_2$ penalty on the parameters, with regularization constant $\lambda$, is very common practice when training modern deep learning models.
This is generally implemented not by modifying the training loss, but rather directly modifying the optimizer's update equation.
For stochastic gradient descent, the result leads to the updated continuous model
\begin{equation}
    \label{eq:decay-gradient-flow}
    \frac{d\theta}{dt} = -g(\theta) - \lambda \theta.
\end{equation}

\textbf{Modeling momentum.}
Momentum is a common extension to SGD that uses an exponentially moving average of gradients to update parameters rather than a single gradient evaluation \cite{rumelhart1986learning}.
The method introduces two additional hyperparameters, a damping coefficient $\alpha$ and a momentum coefficient $\beta$, and applies the two step update equation, $\theta^{(n+1)} = \theta^{(n)} - \eta v^{(n+1)}$ where $v^{(n+1)} = \beta v^{(n)} + (1 - \alpha)g(\theta^{(n)})$.
When $\alpha = \beta = 0$, we regain classic gradient descent.
In general, $\alpha$ effectively reduces the learning rate and $\beta$ controls how past gradients are used in future updates resulting in a form of ``inertia'' accelerating and smoothing the descent trajectory. 
Rearranging the two-step update equation, we find that gradient descent with momentum is a first-order discretization with step size $\eta(1 - \alpha)$ of the ODE\footnote{A more detailed derivation for this ODE can be found in appendix~\ref{appendix:learning-rules}.},
\begin{equation}
    \label{eq:momentum-gradient-flow}
    (1 -\beta)\frac{d\theta}{dt} = -g(\theta).
\end{equation}

\textbf{Modeling stochasticity.}
Stochastic gradients $\hat{g}_{\mathcal{B}}(\theta)$ arise when we consider a batch $\mathcal{B}$ of size $S$ drawn uniformly from the indices $\{1,\dots,N\}$ forming the unbiased gradient estimate $\hat{g}_{\mathcal{B}}(\theta) = \frac{1}{S}\sum_{i\in\mathcal{B}}\nabla\ell(\theta, x_i)$. 
When the batch size is much smaller than the size of the dataset, $S \ll N$, then we can model the batch gradient as an average of $S$ i.i.d. samples from a noisy version of the true gradient $g(\theta)$.
Using the central limit theorem, we assume $\hat{g}_{\mathcal{B}}(\theta) - g(\theta)$ is a Gaussian random variable with mean $\mu = 0$ and covariance matrix $\Sigma(\theta)$.
However, because both the batch gradient and true gradient observe the same geometric properties introduced by symmetry, the noise has a special low-rank structure. 
As we showed in section~\ref{sec:symmetry}, the gradient of the loss, regardless of the batch, is orthogonal to the generator vector field $\partial_\alpha \psi\vert_{\alpha=I}$ associated with a symmetry.
This implies the stochastic noise must also observe the same property. 
In order for this relationship to hold for arbitrary noise, then $\Sigma(\theta) \partial_\alpha \psi\vert_{\alpha=I} =0$.
In other words, the differential symmetry inherent in neural network architectures projects the noise introduced by stochastic gradients onto low rank subspaces, leaving the gradient flow dynamics in these directions unchanged\footnote{Stochasticity and discretization can lead to non-trivial interactions requiring analysis using stochastic differential equation as explained in appendix~\ref{appendix:stochasticity}.}.

\begin{wrapfigure}{R!}{0.5\textwidth}
     \centering
     \begin{subfigure}[b]{0.49\textwidth}
         \centering
         \includegraphics[width=\columnwidth]{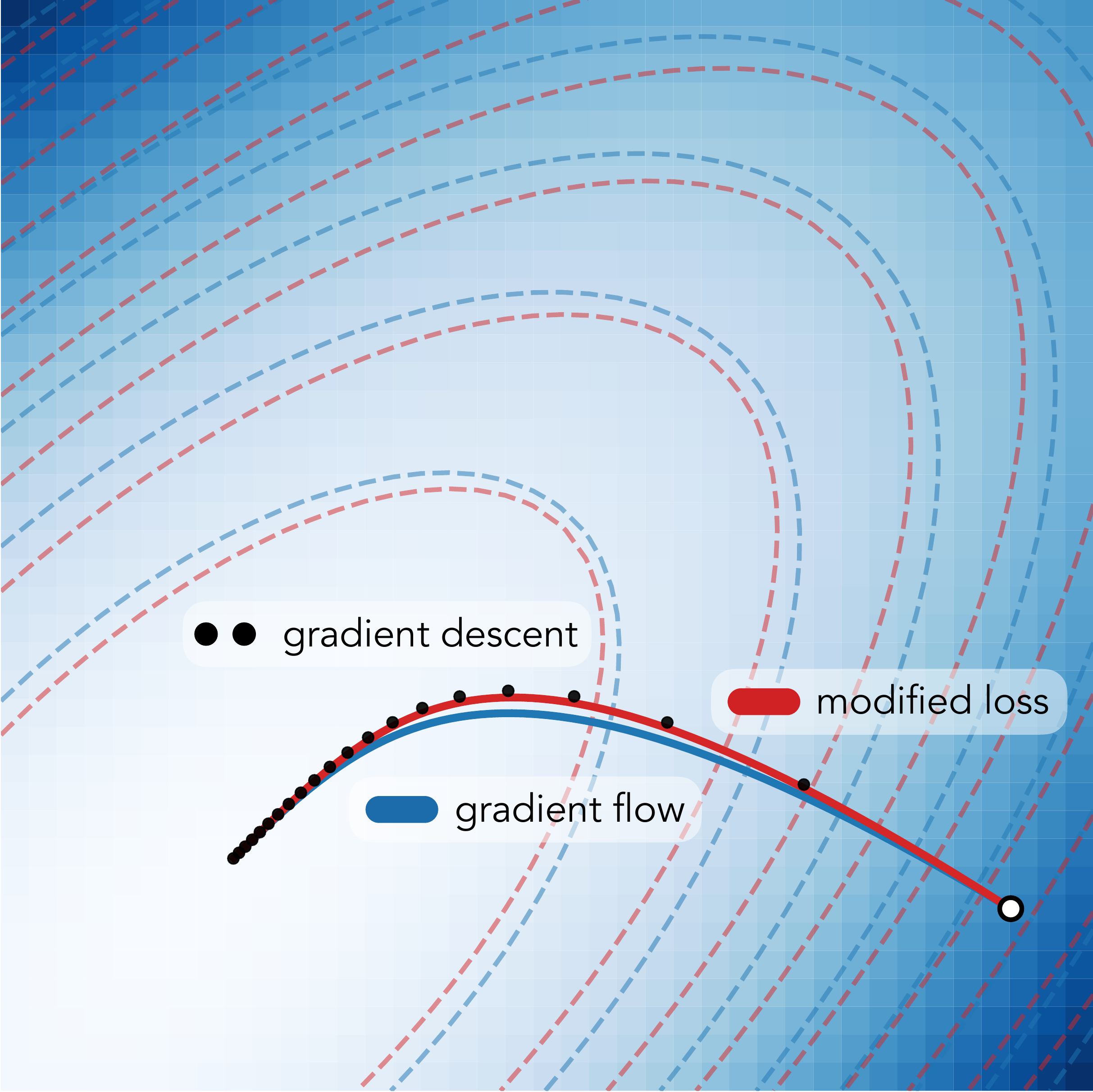}
         \caption{Modified Loss}
         \label{fig:modified-loss}
     \end{subfigure}
     \hfill
     \begin{subfigure}[b]{0.49\textwidth}
         \centering
         \includegraphics[width=\columnwidth]{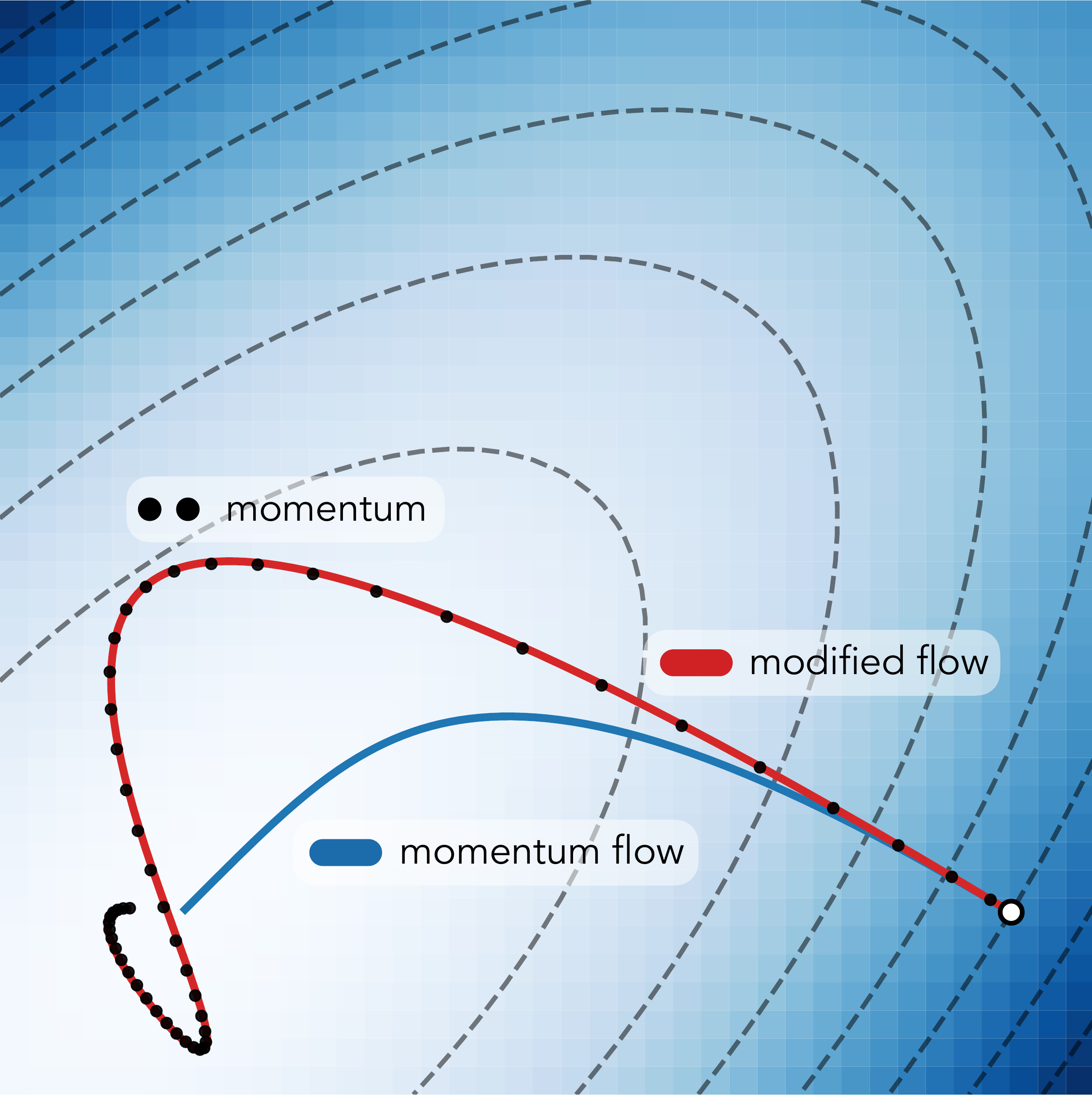}
         \caption{Modified Flow}
         \label{fig:modified-flow}
     \end{subfigure}
        \caption{\textbf{Modeling discretization.}
        We visualize the trajectories of gradient descent and momentum (black dots), gradient flow with and without momentum (blue lines), and the modified dynamics (red lines) on the quadratic loss $\mathcal{L}(w) = w^\intercal\left[\begin{smallmatrix}
        2.5 & -1.5\\ -1.5 & 2
        \end{smallmatrix}\right]w$.
        On the left we visualize gradient dynamics using modified loss.
        On the right we visualize momentum dynamics using modified flow.
        In both settings the modified continuous dynamics visually track the discrete dynamics better than the original continuous dynamics.
        See appendix~\ref{appendix:modified-eq-analysis} for further details.
        }
        \label{fig:visualizing-modified-equation-analysis}
\end{wrapfigure}

\textbf{Modeling discretization.}
\label{sec:modified-gradient-flow}
The effect of discretization when modeling continuous dynamics is a well studied problem in the numerical analysis of partial differential equations.
One tool commonly used in this setting, is modified equation analysis \cite{warming1974modified}, which determines how to better model discrete steps with a continuous differential equation by introducing higher order spatial or temporal derivatives.
We present two methods based on modified equation analysis, which modify gradient flow to account for the effect of discretization.

\emph{Modified loss.}
Gradient descent always moves in the direction of steepest descent on a loss function $\mathcal{L}$ at each step, however, due to the finite nature of the learning rate, it fails to remain on the continuous steepest descent path given by gradient flow.
\cite{li2017stochastic, feng2019uniform} and most recently \cite{barrett2020implicit}, demonstrate that the gradient descent trajectory closely follows the steepest descent path of a modified loss function $\widetilde{\mathcal{L}}$.
The divergence between these trajectories fundamentally depends on the learning rate $\eta$ and the curvature $H$.
As derived in \cite{barrett2020implicit}, and summarized in appendix~\ref{appendix:modified-eq-analysis}, this divergence is given by the gradient correction $-\frac{1}{2}Hg$, which is the gradient of the squared norm $-\frac{\eta}{4}\left|\nabla \mathcal{L}\right|^2$.
Thus, the modified loss is $\widetilde{\mathcal{L}} = \mathcal{L} + \frac{\eta}{4}|\nabla \mathcal{L}|^2$ and the modified gradient flow ODE is
\begin{equation}
    \label{eq:modified-loss}
    \frac{d\theta}{dt} = -g(\theta) - \frac{\eta}{2} H(\theta)g(\theta).
\end{equation}
See Fig.~\ref{fig:visualizing-modified-equation-analysis} for an illustrative example of this method applied to a quadratic loss in $\mathbb{R}^2$.

\emph{Modified flow.}
Rather than modifying gradient flow with higher order ``spatial'' derivatives of the loss function, here we introduce higher order temporal derivatives.
We start by assuming the existence of a continuous trajectory $\theta(t)$ that weaves through the discrete steps taken by gradient descent and then identify the differential equation that generates the trajectory.
Rearranging the update equation for gradient descent, $\theta_{t+1} = \theta_t - \eta g(\theta_t)$, and assuming $\theta(t) = \theta_t$ and $\theta(t + \eta) = \theta_{t+1}$, gives the equality $- g(\theta_t) 
= \frac{\theta(t+\eta) - \theta(t)}{\eta}$, which Taylor expanding the right side results in the differential equation $- g(\theta_t) = \frac{d\theta}{dt} + \frac{\eta}{2} \frac{d^2 \theta}{dt^2} + O(\eta^2)$.
Notice that in the limit as $\eta \to 0$ we regain gradient flow.
For small $\eta$, we obtain a modified version of gradient flow with an additional second-order term,
\begin{equation}
    \label{eq:modified-flow}
    \frac{d\theta}{dt} = -g(\theta) - \frac{\eta}{2}\frac{d^2\theta}{dt^2}.
\end{equation}
This approach to modifying first-order differential equation with higher order temporal derivatives was applied by \cite{kovachki2019analysis} to construct a more realistic continuous model for momentum, as illustrated in Fig.~\ref{fig:visualizing-modified-equation-analysis}.

\section{Combining Symmetry and Modified Gradient Flow to Derive Exact Learning Dynamics}
\label{sec:dynamics}

As shown in section~\ref{sec:conservation},  each symmetry results in a conserved quantity under gradient flow.
We now study how weight decay, momentum, stochastic gradients, and finite learning rates all interact to break these conservation laws.
Remarkably, even when using a more realistic continuous model for stochastic gradient descent, as discussed in section~\ref{sec:continous-model}, we can derive exact learning dynamics for the previously conserved quantities.
To do this we (i) consider a realistic continuous model for SGD, (ii) project these learning dynamics onto the generator vector fields $\partial_\alpha \psi\vert_{\alpha=I}$ associated with each symmetry, (iii) harness the geometric constraints from section~\ref{sec:symmetry} to derive simplified ODEs, and (iv) solve these ODEs to obtain exact dynamics for the previously conserved quantities.
For simplicity, we first consider the continuous model of SGD incorporating weight decay and modified loss, but not momentum and stochasticity\footnote{See appendix~\ref{appendix:stochasticity} for a discussion on how to handle stochasticity in this setting using It\^o calculus.}.
In this setting, the exact dynamics, as fully derived in appendix~\ref{apppendix:sgd-dynamics}, for the parameter combinations tied to the symmetries are,

\begin{align}
    \label{eq:translation-sgd-equation}
    \textbf{Translation:}&\quad\langle \theta(t), \mathbb{1}_\mathcal{A} \rangle = e^{-\lambda t} \langle \theta(0), \mathbb{1}_\mathcal{A} \rangle\\
    \label{eq:scale-sgd-equation}
    \textbf{Scale:}&\quad|\theta_\mathcal{A}(t)|^2  = e^{- 2 \lambda t} |\theta_\mathcal{A}(0)|^2 + \eta \int_0^t e^{-2\lambda (t-\tau)} \left| g_\mathcal{A} \right|^2 d\tau\\
    \label{eq:rescale-sgd-equation}
    \textbf{Rescale:}&\quad|\theta_{\mathcal{A}_1} (t)|^2 -  |\theta_{\mathcal{A}_2} (t)|^2 = \\
    \nonumber
    &\quad e^{- 2 \lambda t} (|\theta_{\mathcal{A}_1} (0)|^2 - |\theta_{\mathcal{A}_2} (0)|^2) + \eta \int_0^t e^{-2\lambda (t-\tau)}  \left(\left| g_{\theta_{\mathcal{A}_1}} \right|^2 - \left| g_{\theta_{\mathcal{A}_2}} \right|^2\right)
    d\tau
\end{align}
Notice how these equations are equivalent to the conservation laws derived in section~\ref{sec:conservation} when $\eta = \lambda = 0$.
Remarkably, even in typical hyperparameter settings (weight decay, stochastic batches, finite learning rates), these solutions match nearly perfectly with empirical results\footnote{See appendix~\ref{appendix:experiments} for more details on the experiments and how we compute the integrals terms in the exact solutions using stochastic gradients.} from modern neural networks (VGG-16) trained on real-world datasets (Tiny ImageNet), as shown in Fig.~\ref{fig:neuron-wise}.
We will now discuss each equation individually.

\begin{wrapfigure}{R!}{0.5\textwidth}
    \centering
    \includegraphics[width=\columnwidth]{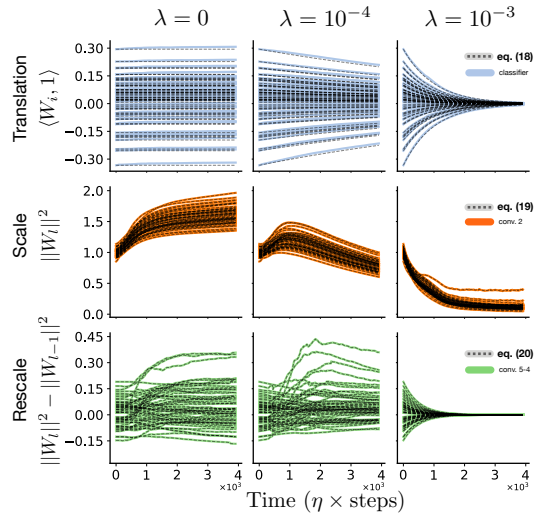}
    \caption{\textbf{Exact dynamics of VGG-16 on Tiny ImageNet.}
    We plot the column sum of the final linear layer (top row) and the difference between squared channel norms of the fifth and fourth convolutional layer (bottom row) of a VGG-16 model without batch normalization.
    We plot the squared channel norm of the second convolution layer (middle row) of a VGG-16 model with batch normalization.
    Both models are trained on Tiny ImageNet with SGD with learning rate $\eta = 0.1$, weight decay $\lambda$, batch size $S = 256$, for $100$ epochs .
    Colored lines are empirical and black dashed lines are the theoretical predictions from equations~(\ref{eq:translation-sgd-equation}), (\ref{eq:scale-sgd-equation}), and (\ref{eq:rescale-sgd-equation}).
    See appendix~\ref{appendix:experiments} for more details on the experiments.
    }
    \label{fig:neuron-wise}
\end{wrapfigure}

\textbf{Translation dynamics.}
For parameters with translation symmetry, equation~\ref{eq:translation-sgd-equation} implies that the sum of these parameters ($\langle \theta_\mathcal{A}(t), \mathbb{1} \rangle$) decays exponentially to zero at a rate proportional to the weight decay.
Equation~\ref{eq:translation-sgd-equation} does not directly depend on the learning rate $\eta$ nor any information of the dataset or task.
This is due to the lack of curvature in the gradient field for these parameters (as shown in Fig.~\ref{fig:visualizing-symmetries}).
This implies that at initialization we can deterministically predict the trajectory for the parameter sum as simple exponential functions with a rate defined by the weight decay.
The first row in Fig.~\ref{fig:neuron-wise} demonstrates this qualitatively, as all trajectories are smooth exponential functions that converge faster for increasing levels of weight decay.

\textbf{Scale dynamics.}
For parameters with scale symmetry, equation~\ref{eq:scale-sgd-equation} implies that the norm for these parameters ($|\theta_{\mathcal{A}}|^2$) is the sum of an exponentially decaying memory of the norm at initialization and an exponentially weighted integral of gradient norms accumulated through training.
Compared to the translation dynamics, the scale dynamics do depend on the data through the gradient norms accumulated throughout training.
Without weight decay $\lambda=0$, the first term stays constant and the second term grows monotonically.
With weight decay $\lambda > 0$, the first term decays monotonically to zero, while the second term can decay or grow, but always stays positive.
The second row in Fig.~\ref{fig:neuron-wise} demonstrates these qualitative relationships.
Without weight decay the norms increase monotonically as predicted and with weight decay the dynamics are non-monotonic and present more complex behavior.
To better understand the forces driving these complex dynamics, we can examine the time derivative of equation~\ref{eq:scale-sgd-equation},
\begin{equation}
    \label{eq:scale-temporal-derivative}
    \frac{d}{dt} |\theta_\mathcal{A}(t)|^2 = - 2\lambda |\theta_\mathcal{A}(t)|^2 + \eta \left| g_\mathcal{A} \right|^2.
\end{equation}
From this equation we see that there is a competition between a centripetal effect due to weight decay ($- 2\lambda |\theta_\mathcal{A}(t)|^2$) and a centrifugal effect due to discretization ($\eta \left| g_\mathcal{A} \right|^2$).
The centripetal effect due to weight decay is a direct consequence of its regularizing influence, pulling the parameters towards the origin.
The centrifugal effect due to discretization originates from the spherical geometry of the gradient field in parameter space -- because scale symmetry implies the gradient is always orthogonal to the parameter itself, each discrete update with a finite learning rate effectively pushes the parameters away from the origin.
At the stationary state of the dynamics, these forces will balance leading the dynamics of these parameters to be constrained to the surface of a high-dimensional sphere.
In particular, at stationarity, then $\frac{d}{dt}|\theta(t)|^2 = 0$, which rearranging equation~(\ref{eq:scale-temporal-derivative}) gives the condition $\omega(t) \equiv \left| \frac{d\theta}{dt} \right|/|\theta| = \sqrt{\frac{2\lambda}{\eta}}$.
Consistent with the results of \cite{wan2020spherical}, this implies that at stationarity the angular speed $\omega(t)$ of the weights is constant and governed only by the learning rate $\eta$ and weight decay constant $\lambda$.
When considering stochasticity explicitly, as explained in appendix~\ref{appendix:stochasticity}, then these dynamics also depend on the covariance of the gradient noise $\Sigma(\theta)$ and batch size $S$.

\textbf{Rescale dynamics.}
For parameters with rescale symmetry, equation~(\ref{eq:rescale-sgd-equation}) is the sum of an exponentially decaying memory of the difference in norms at initialization and an exponentially weighted integral of difference in gradient norms accumulated through training.
Similar to the scale dynamics, the rescale dynamics do depend on the data through the gradient norms, however unlike the scale dynamics we have no guarantee that the integral term is always positive.
This leads to quite sophisticated, complex dynamics, consistent with the third row in Fig.~\ref{fig:neuron-wise}.
Despite the complexity, our theory, nevertheless, quantitatively matches the empirics.
The only apparent pattern from the empirics is that for large enough weight decay, the regularization dominates any complexity introduced by the gradient norms and the difference in parameter norms decays exponentially to zero.

\begin{wrapfigure}{R!}{0.5\textwidth}
    \centering
    \includegraphics[width=\columnwidth]{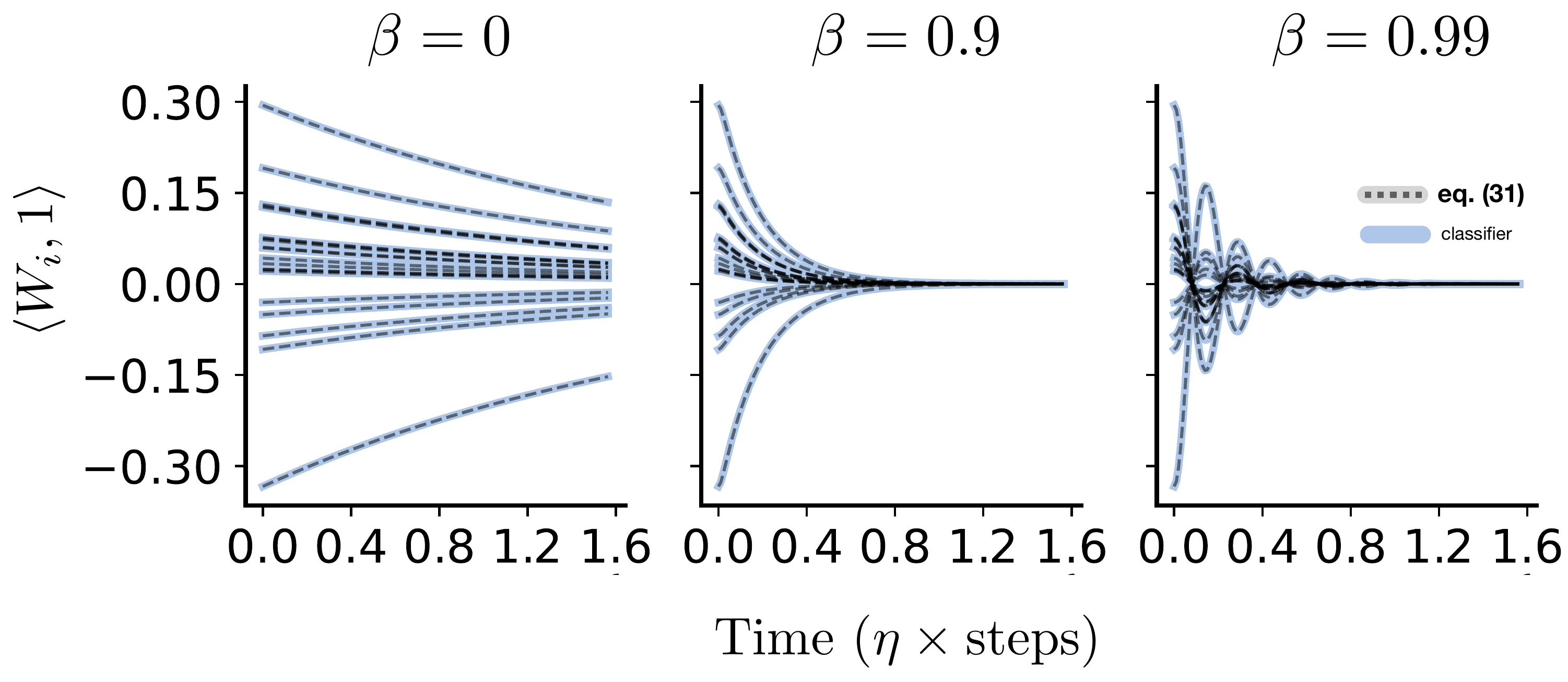}
    \caption{\textbf{Momentum leads to harmonic oscillation.}
    We plot the column sum of the final linear layer of a VGG-16 model (without batch normalization) trained on Tiny ImageNet with SGD with learning rate $\eta = 0.1$, weight decay $\lambda = 5 \times 10^{-3}$, batch size $S = 256$ and momentum $\beta \in \{0, 0.9, 0.99\}$.
    Colored lines are empirical and black dashed lines are the theoretical predictions from equations~(\ref{eq:translation-momentum-equation}).
    }
    \label{fig:momentum}
\end{wrapfigure}

\textbf{Harmonic oscillation with momentum.}
We will now consider a continuous model of SGD \emph{with} momentum.
As discussed in appendix~\ref{appendix:momentum-dynamics}, we consider the continuous model incorporating weight decay, momentum, stochasticity, and modified flow.
Under this model, the solutions we obtain take the form of driven harmonic oscillators where the driving force is given by the gradient norms, the friction is defined by the momentum constant, the spring coefficient is defined by the regularization rate, and the mass is defined by the the learning rate and momentum constant. 
For most standard hyperparameter choices, these solutions are in the overdamped setting and align well with the first-order solutions for SGD without momentum up to a time rescaling, as shown in the left and middle panel of Fig.~\ref{fig:momentum}.
However, for large values of beta we can push the solution into the underdamped regime where we would expect harmonic oscillation and indeed, we can empirically verify our predictions, even at scale for VGG-16 trained on Tiny ImageNet, as in right panel of Fig.~\ref{fig:momentum}.

\vspace{-0.5em}
\section{Conclusion}
\vspace{-0.5em}
\label{sec:conclusion}
Despite being the central guiding principle in the exploration of the physical world \cite{anderson1972more, gross1996role}, symmetry has been underutilized in understanding the mechanics of neural networks.
In this paper, we constructed a unifying theoretical framework harnessing the geometric properties of symmetry and more realistic continuous equations for learning dynamics that model weight decay, momentum, stochasticity, and discretization.
We use this framework to derive \emph{exact} dynamics for meaningful combinations of parameters, which we experimentally verified on large scale neural networks and datasets.
For example, in the case of a VGG-16 model with batch normalization trained on Tiny-ImageNet (one of the model/dataset combinations we considered in section~\ref{sec:dynamics}) there are $12,751$ distinct parameter combinations whose dynamics we can analytically describe.
Overall, this work provides a first step towards understanding the mechanics of learning in neural networks without unrealistic simplifying assumptions.

\clearpage

\section*{Acknowledgements}

We thank Daniel Bear, Lauren Gillespie, Kyogo Kawaguchi, Brett Larsen, Eshed Margalit, Alain Studer, Sho Sugiura, Umberto Maria Tomasini, and Atsushi Yamamura for helpful discussions. 
This work was funded in part by the IBM-Watson AI Lab.
D.K. thanks the Stanford Data Science Scholars program for support. 
J.S. thanks the Mexican National Council of Science and Technology (CONACYT) for support.
S.G. thanks the James S. McDonnell and Simons Foundations, NTT Research, and an NSF CAREER Award for support. 
D.L.K.Y thanks the McDonnell Foundation (Understanding Human Cognition Award Grant No. 220020469), the Simons Foundation (Collaboration on the Global Brain Grant No. 543061), the Sloan Foundation (Fellowship FG-2018-10963), the National Science Foundation (RI 1703161 and CAREER Award 1844724), and the DARPA Machine Common Sense program for support and the NVIDIA Corporation for hardware donations.

\bibliography{iclr2021_conference}
\bibliographystyle{iclr2021_conference}

\clearpage
\appendix

\section{Symmetry and Geometry}
\label{appendix:symmetry-conservation}

Here we derive in detail the geometric properties of the loss landscape introduced by symmetry, as discussed in section~\ref{sec:symmetry}.
Consider a function $f(\theta)$ where $\theta \in \mathbb{R}^m$. 
This function possesses a symmetry if it is invariant under the action $\psi$ of a group $G$ on the parameter vector $\theta$, i.e., if $\theta \mapsto \psi(\theta, \alpha)$ where $\alpha \in G$, then $F\left( \theta, \alpha \right) = f \left( \psi (\theta, \alpha) \right) = f(\theta)$ for all $(\theta, \alpha)$.
Symmetry enforces a geometric relationship between the gradient $\nabla F$ and Hessian $\mathbf{H}F$ of the composition $F(\theta, \alpha)$ with the gradient $\nabla f$ and Hessian $\mathbf{H}f$ of the original function $f(\theta)$.
This relationship can be described by five constraints on $\nabla f$ and $\mathbf{H}f$.
Considering these general formulae when $f(\theta) = \mathcal{L}(\theta)$, the training loss of a neural network, yields fifteen distinct equations describing the geometrical relationships between architectural symmetries and the loss landscapes, some of which have been identified individually in existing literature (Table~\ref{table:symmetry}).

\begin{table}[H]
\begin{tabular}{@{}crccc@{}}
\toprule
& & Translation  & Scale  & Rescale     \\ 
\midrule
\multirow{2}{*}{$\nabla F$}
& $\partial_\theta F$
& ---
& 1, 2, 5
&  6
\\ 
& $\partial_\alpha F$                 
& ---        
& 3, 4
& 7,8,9
\\ 
\midrule
\multirow{3}{*}{$\mathbf{H}F$}
& $\partial^2_\theta F$               
& ---        
& 3, 5     
& --- 
\\ 
& $\partial_\theta \partial_\alpha F$ 
& ---        
& ---       
& --- 
\\ 
& $\partial^2_\alpha F$               
& ---        
& ---       
& 9 \\ 
\bottomrule
\end{tabular}
\caption{\textbf{Unifying existing literature through symmetry.}
Here we provide references to existing literature describing geometric properties of either the gradient or Hessian introduced by a network's architecture.
All of these properties can be unified as consequences of either a translation, scale, or rescale symmetry in the training loss.
}
\label{table:symmetry}
\end{table}

\begin{enumerate}
    \item \cite{ioffe2015batch} has motivated the effectiveness of Batch Normalization by its scale invariant property.
    In particular, they have noted that Batch Normalization will stabilize back-propagation because ``The scale does not affect the layer Jacobian nor, consequently, the gradient propagation. Moreover, larger weights lead to smaller gradients, and Batch Normalization will stabilize the parameter growth.''
    \item \cite{van2017l2} has then shown that the role of $L_2$ regularization when combined with batch \cite{ioffe2015batch}, weight \cite{salimans2016weight} or layer \cite{ba2016layer} normalization is not to regularize the function, but to effectively control the learning rate.
    \item \cite{zhang2018three} has thoroughly studied mechanisms of weight decay regularization and derived various geometric properties of loss landscapes along the way.
    \cite{arora2018theoretical} has theoretically analyzed the automatic tuning property of learning rate in networks with Batch Normalization.
    \item \cite{li2020reconciling} studied the interaction of weight decay and batch normalization in the setting of stochastic gradients.
    \item \cite{neyshabur2015path} has identified that SGD is not rescale equivariant even when network outputs are rescale invariant. This is a problem because gradient descent performs very poorly on unbalanced networks due to the lack of equivariance. Motivated by the issue, they have introduced a new optimizer, Path-SGD, that is rescale equivariant.
    \item \cite{arora2018optimization} proved that the weights of linear artificial neural networks satisfy strong balancedness property.
    \item \cite{du2018algorithmic} studied implicit regularization in networks with homogeneous activation functions. To do that, they showed conservation law of parameters with rescale invariance. 
    \item \cite{liang2019fisher} have proposed a new capacity measure to study generalization that respects rescale invariance of networks. Along the way, they showed geometric properties of gradients and Hessians for networks with rescale invariance. However, their results were restricted to a layer without biases.
    \item \cite{tanaka2020pruning} have proved gradient properties of parameters with rescale invariance at neuron level including biases.
\end{enumerate}
\clearpage

\subsection{Gradient Geometry}
If a function $f$ posses a symmetry, then there exists a geometric constraint on the relationship between the gradients $\nabla F$ and $\nabla f$ at all $(\theta, \alpha)$, 
$$
\nabla F
= 
\begin{pmatrix}
\partial_\theta F\\
\partial_\alpha F
\end{pmatrix}
=
\begin{pmatrix}
\partial_{\psi} F \partial_{\theta} \psi\\
\partial_{\psi} F  \partial_{\alpha} \psi
\end{pmatrix}
=
\begin{pmatrix}
\nabla f \\
0
\end{pmatrix}.
$$
The top element of the gradient relationship, $\partial_\theta F$, evaluated at any $(\theta, \alpha)$, yields the property
\begin{equation}
    \label{eq:gradient-geometry-1}
    \partial_{\theta} \psi\left.\nabla f\right|_{\psi(\theta, \alpha)} = \left.\nabla f\right|_{\theta},
\end{equation}
which describes how the symmetry transformation affects the function's gradients despite leaving the output unchanged.
The bottom element of the gradient relationship, $\partial_\alpha F$, evaluated at the identity element of $G$ yields the property 
\begin{equation}
    \label{eq:gradient-geometry-2}
    \langle\nabla f, \partial_\alpha \psi\vert_{\alpha=I} \rangle = 0,
\end{equation}
which implies the gradient $\nabla f$ is perpendicular to the vector field $\partial_\alpha \psi\vert_{\alpha=I}$ that generates the symmetry, for all $\theta$.
In the specific setting when $f(\theta) = \mathcal{L}(\theta)$, the training loss of a neural network, these gradient properties are summarized in Table~\ref{table:gradient} for the translation, scale, and rescale symmetries described in section~\ref{sec:symmetry}.

\begin{table}[H]
%\ra{1.3}
\begin{tabular}{@{}rccc@{}}
\toprule
& Translation  & Scale  & Rescale     \\ 
\midrule
$g (\theta) =$    &    $g \left( \psi(\theta, \alpha) \right)$  & $\text{diag}(\alpha_{\mathcal{A}}) g \left( \psi (\theta, \alpha) \right)$       &   $\text{diag}(\alpha_{\mathcal{A}_1} \odot \alpha^{-1}_{\mathcal{A}_2}) g \left( \psi (\theta, \alpha) \right)$\\
$g (\theta) \perp$    &    $\mathbb{1}_\mathcal{A}$  & $\theta_{\mathcal{A}}$    & $\theta_{\mathcal{A}_1} - \theta_{\mathcal{A}_2}$ \\ 
\bottomrule
\end{tabular}
\caption{\textbf{Geometric properties of the gradient.}
The gradients of a neural network with either translation, scale or rescale symmetry observe certain geometric properties no matter the dataset or step in training.
}
\label{table:gradient}
\end{table}

Notice that the first row of Table~\ref{table:gradient} implies that symmetry transformations affect learning dynamics governed by gradient descent for scale and rescale symmetries, while it does not for translation symmetry.
These observations are in agreement with \cite{van2017l2} who has shown that effective learning rate is inversely proportional to the norm of parameters immediately preceding the batch normalization layers and \cite{neyshabur2015path} who have noticed that SGD is not invariant to the rescale symmetry that the network output respects and proposed Path-SGD to fix the discrepancy.

\subsection{Hessian Geometry}
If a function $f$ posses a symmetry, then there also exists a geometric constraint on the relationship between the Hessian matrices $\mathbf{H}F$ and $\mathbf{H}f$ at all $(\theta, \alpha)$,
\begin{align*}
\mathbf{H}F &= 
\begin{pmatrix}
\partial^2_{\theta} F & \partial_{\theta} \partial_{\alpha} F \\
\partial_{\alpha} \partial_{\theta} F & \partial^2_{\alpha} F
\end{pmatrix}\\
&=
\begin{pmatrix}
\partial_\psi F \partial^2_\theta \psi
+ \partial^2_\psi F (\partial_\theta \psi)^2
&
\partial^2_\psi F \partial_\theta \psi \partial_\alpha \psi + \partial_\psi F \partial_\theta \partial_\alpha \psi
\\
\partial^2_\psi F \partial_\theta \psi \partial_\alpha \psi + \partial_\psi F \partial_\theta \partial_\alpha \psi
&
(\partial_\alpha \psi)^\intercal \partial^2_\psi F \partial_\alpha \psi
+
(\partial_{\psi} F)^\intercal \partial^2_\alpha \psi
\end{pmatrix}
=
\begin{pmatrix}
\mathbf{H} f & 0 \\
0 & 0
\end{pmatrix}.
\end{align*}

The first diagonal element, $\partial^2_\theta F$, evaluated at any $(\theta, \alpha)$, yields the property
\begin{equation}
    \label{eq:hessian-geometry-1}
    \partial^2_\theta \psi\left.\nabla f\right\vert_{\psi(\theta,\alpha)}
    + (\partial_\theta \psi)^2\left.\mathbf{H} f\right\vert_{\psi(\theta,\alpha)}
    = \left.\mathbf{H} f\right\vert_\theta,
\end{equation}
which describes how the symmetry transformation affects the function's Hessian despite leaving the output unchanged.
The off-diagonal elements, $\partial_{\theta} \partial_{\alpha} F 
= \partial_{\alpha} \partial_{\theta} F$, evaluated at the identity element of $G$ yields the property
\begin{equation}
    \label{eq:hessian-geometry-2}
    \mathbf{H} f [\partial_\theta \psi\vert_{\alpha=I}] \partial_\alpha \psi\vert_{\alpha=I} + [\partial_\theta \partial_\alpha\vert_{\alpha=I}] \psi\nabla f
    = 0,
\end{equation}
which implies the geometry of gradient and Hessian are connected through the action of the symmetry. 
Lastly, the second diagonal element, $\partial^2_\alpha F$,  represents an equality, evaluated at the identity element of $G$ yields the property
\begin{equation}
    \label{eq:hessian-geometry-3}
    (\partial_\alpha \psi\vert_{\alpha=I})^\intercal \mathbf{H} f (\partial_\alpha \psi\vert_{\alpha=I})
    +
    \langle\nabla f, \partial^2_\alpha \psi\vert_{\alpha=I} \rangle
    = 0,
\end{equation}
which combines the geometric relationships in equation~\ref{eq:gradient-geometry-2} and equation~\ref{eq:hessian-geometry-2}.
In the specific setting when $f(\theta) = \mathcal{L}(\theta)$, the training loss of a neural network, these Hessian properties are summarized in Table~\ref{table:gradient} for the translation, scale, and rescale symmetries described in section~\ref{sec:symmetry}.

\begin{table}[H]
%\ra{1.3}
\begin{tabular}{@{}rccc@{}}
\toprule
& Translation  & Scale  & Rescale     \\ 
\midrule
$H (\theta) =$    
& $H(\psi (\theta, \alpha))$  & $\text{diag}(\alpha^2_{\mathcal{A}})H(\psi (\theta, \alpha))$    
& $\text{diag}(\alpha^2_{\mathcal{A}_1} \odot \alpha^{-2}_{\mathcal{A}_2} )H(\psi (\theta, \alpha))$ \\ 
$0 =$    
& $H \mathbb{1}_{\mathcal{A}}$
& $H \theta_{\mathcal{A}} + g_{\mathcal{A}}$  
& $H (\theta_{\mathcal{A}_1} - \theta_{\mathcal{A}_2}) + g_{\mathcal{A}_1} - g_{\mathcal{A}_2}$ \\ 
$0 =$    
& $\mathbb{1}_{\mathcal{A}}^\intercal H \mathbb{1}_{\mathcal{A}}$
& $\theta_{\mathcal{A}}^\intercal H \theta_{\mathcal{A}}$  
& $(\theta_{\mathcal{A}_1} - \theta_{\mathcal{A}_2})^\intercal H (\theta_{\mathcal{A}_1} - \theta_{\mathcal{A}_2}) + g_{\mathcal{A}_1} \theta_{\mathcal{A}_1} + g_{\mathcal{A}_2} \theta_{\mathcal{A}_2}$ \\ 
\bottomrule
\end{tabular}
\caption{\textbf{Geometric properties of the Hessian.}
The Hessian matrix of a neural network with either translation, scale or rescale symmetry observe certain geometric properties no matter the dataset or step in training.
}
\label{table:hessian}
\end{table}

The translation, scale, and rescale symmetries identified in section~\ref{sec:symmetry} is not an exhaustive list of the symmetries present in neural network architectures.  
For example, more general rescale symmetries can be defined by the group $\mathrm{GL}^+_1(\mathbb{R})$ and action $\theta \mapsto \alpha^{k_1}_{\mathcal{A}_1} \odot \alpha^{-k_2}_{\mathcal{A}_2} \odot \theta$, which occur in networks with quadratic activation functions.
A stronger form of rescale symmetry also occurs for linear networks under the action of the group $GL^+_k(\mathbb{R})$ of $k \times k$ invertible matrices, as noticed previously by \cite{arora2018convergence, du2018algorithmic}.
Interestingly, some of the gradient and Hessian properties for scale symmetry can also be easily proven as consequences of Euler's Homogeneous Function Theorem when $k=0$.

\section{Conservation Laws}
\label{appendix:conservation}

Here we repeat Theorem~\ref{theorem:symmetry-conservation} and provide a detailed derivation.
\begin{theorem*}\textbf{Symmetry and conservation laws in neural networks.}
Every differentiable symmetry $\psi(\alpha, \theta)$ of the loss that satisfies $\langle \theta, [\partial_{\alpha} \partial_{\theta} \psi\vert_{\alpha=I}] g(\theta) \rangle = 0$ has the corresponding conservation law $\tfrac{d}{dt}\langle \theta, \partial_\alpha \psi\vert_{\alpha=I} \rangle = 0$ through learning under gradient flow.
\end{theorem*}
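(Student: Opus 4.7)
The plan is to differentiate the candidate conserved quantity $\langle \theta(t), \partial_\alpha \psi\vert_{\alpha=I}\rangle$ along the gradient flow trajectory and check that both pieces produced by the product rule vanish: one by the gradient orthogonality established in Section~\ref{sec:symmetry}, the other by the explicit hypothesis of the theorem. Since $\partial_\alpha \psi\vert_{\alpha=I}$ is in general a $\theta$-dependent vector field, the derivative splits as
\[
\tfrac{d}{dt}\langle \theta, \partial_\alpha \psi\vert_{\alpha=I}\rangle = \langle \dot\theta, \partial_\alpha \psi\vert_{\alpha=I}\rangle + \langle \theta, \tfrac{d}{dt} \partial_\alpha \psi\vert_{\alpha=I}\rangle.
\]

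For the first summand I would substitute $\dot\theta = -g(\theta)$ from equation~\ref{eq:gradient-flow} and invoke equation~\ref{eq:gradient-geometry}, which states that $\langle g(\theta), \partial_\alpha \psi\vert_{\alpha=I}\rangle = 0$ whenever $\psi$ is a symmetry of the loss; this kills the term outright. For the second summand I would apply the chain rule to obtain $\tfrac{d}{dt}\partial_\alpha \psi\vert_{\alpha=I} = -[\partial_\theta \partial_\alpha \psi\vert_{\alpha=I}] g(\theta)$, then appeal to equality of mixed partials (Clairaut/Schwarz, valid under mild smoothness of $\psi$) to rewrite this as $-[\partial_\alpha \partial_\theta \psi\vert_{\alpha=I}] g(\theta)$. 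Pairing with $\theta$ produces $-\langle \theta, [\partial_\alpha \partial_\theta \psi\vert_{\alpha=I}] g(\theta)\rangle$, which vanishes by the stated hypothesis, completing the argument.

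The main subtle point is not the differentiation itself but verifying that the hypothesis is substantive and is met in the cases of practical interest. I would discharge it explicitly for each symmetry in Section~\ref{sec:symmetry}: for translation $\partial_\alpha \partial_\theta \psi\vert_{\alpha=I}$ vanishes identically because the action is affine in $\theta$; for scale it equals $\mathrm{diag}(\mathbb{1}_\mathcal{A})$, so the hypothesis reduces to $\langle \theta_\mathcal{A}, g\rangle = 0$, which is precisely equation~\ref{eq:scale-gradient}; for rescale an analogous reduction yields equation~\ref{eq:rescale-gradient}. Thus in every case of interest the hypothesis follows from the same gradient-perpendicularity supplied by the symmetry itself, so no new geometric input beyond Section~\ref{sec:symmetry} is required.
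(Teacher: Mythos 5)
Your proposal is correct and is essentially the paper's own argument: both differentiate $\langle \theta, \partial_\alpha \psi\vert_{\alpha=I}\rangle$ via the product/chain rule, kill one term with the gradient orthogonality $\langle g, \partial_\alpha \psi\vert_{\alpha=I}\rangle = 0$ and the other with the stated hypothesis $\langle \theta, [\partial_\alpha \partial_\theta \psi\vert_{\alpha=I}] g(\theta)\rangle = 0$, and then verify the hypothesis case by case for translation, scale, and rescale exactly as the appendix does. The only cosmetic difference is that the paper starts from the projection $\langle \dot\theta, \partial_\alpha \psi\vert_{\alpha=I}\rangle = 0$ and factors it, whereas you differentiate the conserved quantity directly; these are the same computation.
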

\begin{proof}
Project the gradient flow learning dynamics, $\frac{d\theta}{dt} = -g(\theta)$, onto the vector field that generates the symmetry $\partial_\alpha \psi\vert_{\alpha=I}$, evaluated at the identity element,
$$\langle\frac{d\theta}{dt}, \partial_\alpha \psi\vert_{\alpha=I}\rangle = \langle-g(\theta), \partial_\alpha \psi\vert_{\alpha=I}\rangle = 0.$$
We can factor the left side of this equation as
\begin{align*}
\langle\frac{d\theta}{dt}, \partial_\alpha \psi\vert_{\alpha=I}\rangle &= \frac{d}{dt}\langle\theta, \partial_\alpha \psi\vert_{\alpha=I}\rangle - \langle\theta, \frac{d}{dt}\partial_\alpha \psi\vert_{\alpha=I}\rangle\\
&= \frac{d}{dt}\langle\theta, \partial_\alpha \psi\vert_{\alpha=I}\rangle - \langle\theta, \partial_\alpha\partial_\theta \psi\vert_{\alpha=I}\frac{d\theta}{dt}\rangle\\
&= \frac{d}{dt}\langle\theta, \partial_\alpha \psi\vert_{\alpha=I}\rangle + \langle\theta, \partial_\alpha\partial_\theta \psi\vert_{\alpha=I} g(\theta)\rangle
\end{align*}
By assumption, $\langle\theta, [\partial_\alpha\partial_\theta \psi\vert_{\alpha=I}] g(\theta)\rangle = 0$, implying $\frac{d}{dt}\langle\theta, \partial_\alpha \psi\vert_{\alpha=I}\rangle = 0$.
\end{proof}

The condition $\langle \theta, [\partial_{\alpha} \partial_{\theta} \psi\vert_{\alpha=I}] g(\theta) \rangle = 0$ holds for the translation, scale, and rescale symmetries we consider in section~\ref{sec:symmetry}.
For translation symmetry, $\partial_{\alpha} \partial_{\theta} \psi\vert_{\alpha=I}=0$.
For scale symmetry, $\partial_{\alpha} \partial_{\theta} \psi\vert_{\alpha=I} = I$ and $\langle \theta_\mathcal{A}, g(\theta_\mathcal{A}) \rangle = 0$.
For rescale symmetry, $\partial_{\alpha} \partial_{\theta} \psi\vert_{\alpha=I} = \begin{bmatrix}I&0\\0&-I\end{bmatrix}$ and $\langle \theta_{\mathcal{A}_1}, g(\theta_{\mathcal{A}_1}) \rangle - \langle \theta_{\mathcal{A}_2}, g(\theta_{\mathcal{A}_2}) \rangle = 0$.

\clearpage
\section{Limiting Differential Equations for Learning Rules}
\label{appendix:learning-rules}

Here we identify ordinary differential equations whose first-order discretization give rise to the gradient descent and classical momentum algorithms.
These differential equations can be understood as the limiting dynamics for their respective discrete algorithms as the learning rate $\eta \to 0$.

\subsection{Gradient Descent}
Gradient descent with learning rate $\eta$ is given by the update equation
$$\theta_{k+1} = \theta_k - \eta g(\theta_{k}),$$
and initial condition $\theta_0$.
Rearranging the difference between consecutive updates gives
$$\frac{\theta_{k+1} - \theta_k}{\eta} = -g(\theta_k).$$
This is a discretization with step size $\eta$ of the first order ODE
\begin{equation*}
    \frac{d}{dt}\theta = -g(\theta),
\end{equation*}
where we used the forward Euler discretization $\tfrac{d}{dt}\theta_{k} = \tfrac{\theta_{k+1} - \theta_k}{\eta}$.
This ODE is commonly referred to as \emph{gradient flow}.

\subsection{Classical Momentum}

Classical momentum with learning rate $\eta$, damping coefficient $\alpha$, and momentum parameter $\beta$, is given by the update equation\footnote{The default PyTorch implementation does not perform damping on the gradient in the first momentum buffer $v_1$.}
\begin{align*}
    v_{k+1} &= \beta v_{k} + (1 - \alpha)g(\theta_{k}),\\
    \theta_{k+1} &= \theta_k - \eta v_{k+1},
\end{align*}
and initial conditions $v_0 = 0$ and some $\theta_0$.
The difference between consecutive updates is
\begin{align*}
    \theta_{k+1} - \theta_k &= -\eta v_{k+1}\\
    &= -\eta\beta v_{k} -\eta(1 - \alpha)g(\theta_{k})\\
    &=  \beta \left(\theta_{k} - \theta_{k-1}\right) -\eta(1 - \alpha) g(\theta_k).
\end{align*}
Rearranging this equation we get
$$\frac{\theta_{k+1} - \theta_k}{\eta(1 - \alpha)}  - \beta\frac{\theta_{k} - \theta_{k-1}}{\eta(1 - \alpha)} = -g(\theta_k).$$
This is a discretization with step size $\eta(1 - \alpha)$ of the first order ODE
\begin{equation*}
    (1 -\beta)\frac{d}{dt}\theta = -g(\theta),
\end{equation*}
where we used the forward Euler discretization $\tfrac{d}{dt}\theta_{k} = \tfrac{\theta_{k+1} - \theta_k}{\eta(1 - \alpha)}$ and backward Euler discretization $\tfrac{d}{dt}\theta_{k} = \tfrac{\theta_{k} - \theta_{k-1}}{\eta(1 - \alpha)}$.
We will refer to this equation as \emph{momentum flow}.
A more detailed derivation for this ODE under Nesterov variants of classical momentum can be found in \cite{su2014differential} and \cite{kovachki2019analysis}.

\clearpage
\section{Modified Equation Analysis}
\label{appendix:modified-eq-analysis}

\begin{wrapfigure}{R!}{0.4\textwidth}
    \centering
    \vspace{-22pt}
    \includegraphics[width=0.9\columnwidth]{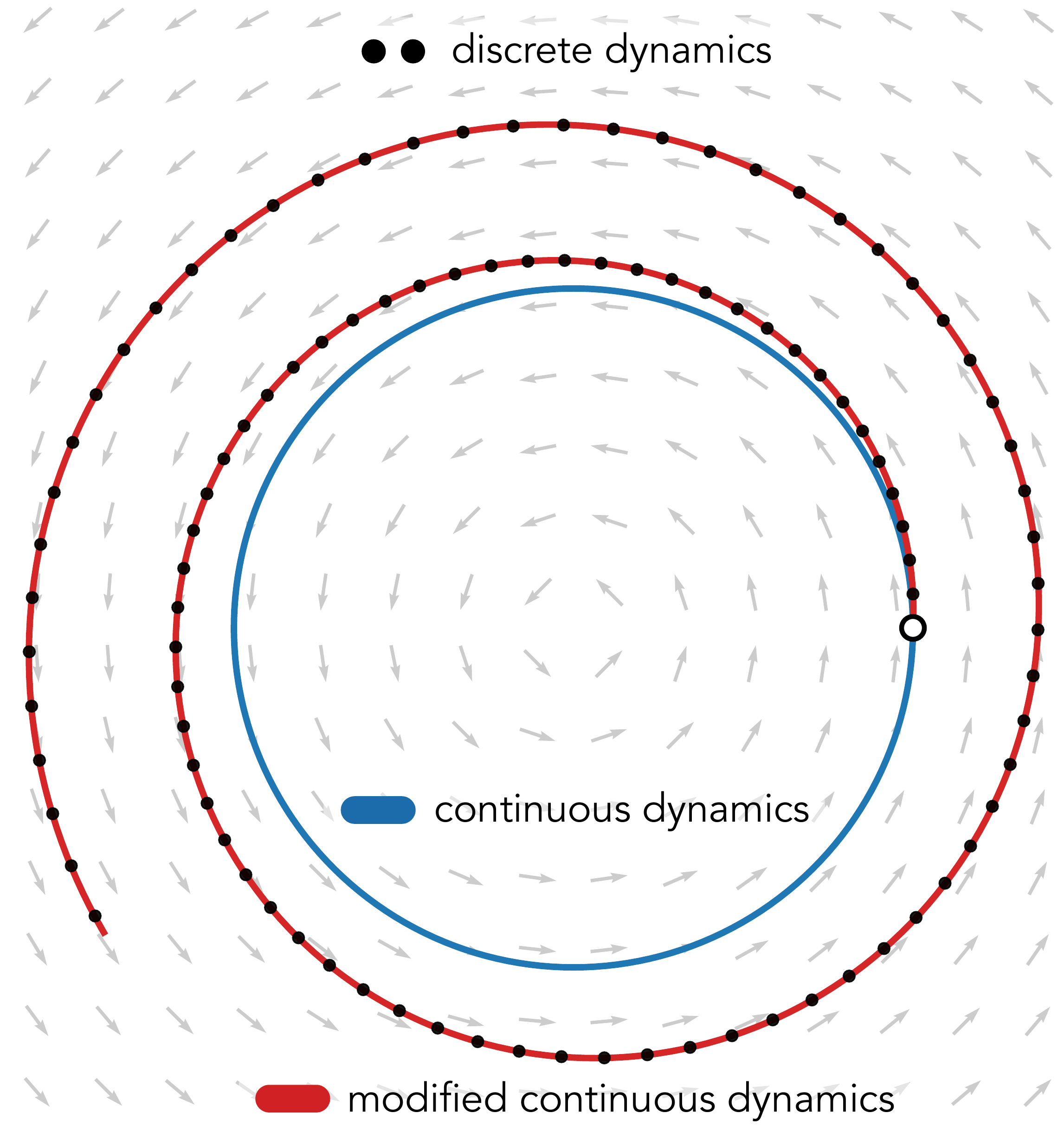}
    \vspace{-2pt}
    \caption{
    \textbf{Circular motion.} 
    Consider the vector field $f(x) = \left[\begin{smallmatrix}
    0 & -1\\ 1 & 0
    \end{smallmatrix}\right]x$ and the discrete dynamics $x_{t+1} = x_t + \eta f(x_t)$ (black dots), the continuous dynamics $\dot{x} = f(x)$ (blue line), and the modified continuous dynamics $\dot{x} = f(x) + \frac{\eta}{2}x$.
    We visualize the trajectories given by these dynamics using the initial condition $x_0 = \left[\begin{smallmatrix}1 & 0\end{smallmatrix}\right]^\intercal$ (white circle) and a step size $\eta=0.1$.
    As we can see, the modified continuous trajectory better matches the discrete trajectory.
    }
    \vspace{-5pt}
    \label{fig:circular-flow}
\end{wrapfigure}

Gradient descent always moves in the direction of steepest descent on a loss function, however, due to the finite nature of the learning rate, it fails to remain on the continuous steepest descent path.
The divergence between the discrete and continuous trajectories fundamentally depends on the learning rate and the curvature of the loss.
It is thus natural to assume there exists a more realistic continuous model for SGD that incorporates both these terms in a non-trivial way.
\emph{How can we better model the discrete dynamics of gradient descent with a continuous differential equation?}

\textbf{Intuition from finite difference methods.} 
To answer this question we will take inspiration from tools developed for finite difference methods.
Finite difference methods are a class of numerical techniques for approximating derivatives in the analysis of partial differential equations (PDE).
These approximations are applied iteratively to construct numerical solutions to a PDE given some initial conditions.
However, this discretization process introduces numerical artifacts, which can lead to a significant difference between the numerical solution and the true solution for the PDE.
Modified equation analysis is a method for understanding this difference by modeling the numerical artifacts as higher-order spatial or temporal derivatives modifying the original PDE.
This approach can be used to construct modified continuous dynamics that better approximate discrete dynamics, as illustrated in Fig.~\ref{fig:circular-flow}.

\subsection{Modified Loss}
Taking inspiration from modified equation analysis, \cite{li2017stochastic, feng2019uniform} and most recently \cite{barrett2020implicit}, demonstrate that the trajectory given by gradient descent closely follows the steepest descent path of a modified loss function $\widetilde{\mathcal{L}}$, rather than the original loss $\mathcal{L}$.
As explained in \cite{barrett2020implicit}, assume there exists a modified vector field with corrections $g_i$ in powers of the learning rate to the original vector field $g$ that the discrete dynamics follow.
In other words, rather than considering the dynamics given by gradient flow, $\frac{d}{dt}\theta = -g(\theta)$, we consider the modified differential equation,
$$\frac{d}{dt}\theta = -g(\theta) + \eta g_1(\theta) + \eta^2 g_2(\theta) + \dots.$$
Truncating the modified vector field up to the order $\eta$ and using backward error analysis we can derive that the first-order correction $g_1 = -\frac{1}{2}Hg$, which is the gradient of the squared norm $-\frac{\eta}{4}\left|\nabla \mathcal{L}\right|^2$.
Thus, the truncated modified differential equation is simply gradient flow on a modified loss $\widetilde{\mathcal{L}} = \mathcal{L} + \frac{\eta}{4}|\nabla \mathcal{L}|^2$.

\textbf{Convex quadratic loss.}
To illustrate modified loss, we will consider the trajectories of gradient descent $w_t$, gradient flow $\hat{w}(t)$, and modified gradient flow $\tilde{w}(t)$ on the convex quadratic loss $\mathcal{L}(w) = \frac{1}{2}w^\intercal A w$, where $A \succ 0$ is some positive definite matrix, as shown in Fig.~\ref{fig:visualizing-modified-equation-analysis}.
For a finite learning rate $\eta$ and initial condition $w_0$, gradient descent is given by the update formula $w_{t + 1} = w_t - \eta Aw_t$.
Gradient flow is defined as $\frac{d}{dt}\hat{w} = -A\hat{w}$, a linear first-order differential equation.
For the initial condition $w_0$, the resulting initial value problem can be solved exactly giving the gradient flow trajectory $\hat{w}(t) = S^{-1}e^{-\Lambda t}Sw_0$, where $A = S^{-1}\Lambda S$ is the diagonalization of the curvacture matrix.
For learning rate $\eta$, the modified loss is $\widetilde{\mathcal{L}}(w) = \frac{1}{2}w^\intercal A w + \frac{\eta}{4}w^\intercal A^2 w$ and modified gradient flow is defined as
$\frac{d}{dt}\tilde{w} = -A\tilde{w} -  \frac{\eta}{2}A^2\tilde{w}$. 
For the initial condition $w_0$, the resulting initial value problem can be solved exactly giving the modified gradient flow trajectory $\tilde{w}(t) = S^{-1}e^{-\left(\Lambda + \tfrac{\eta}{2} \Lambda^2\right) t}Sw_0$.

\subsection{Modified Flow}

Rather than modify gradient flow with higher order ``spatial'' derivatives of the loss function, here we introduce higher order temporal derivatives.
We start by assuming the existence of a continuous trajectory $\theta(t)$ that weaves through the discrete steps taken by SGD and then identify the differential equation that generates the trajectory.
Rearranging the update equation for SGD, $\theta_{t+1} = \theta_t - \eta g_{\mathcal{B}}(\theta_t)$, and assuming $\theta(t) = \theta_t$ and $\theta(t + \eta) = \theta_{t+1}$, gives the equality $- g_{\mathcal{B}} (\theta_t) 
= \frac{\theta(t+\eta) - \theta(t)}{\eta}$, which Taylor expanding the right side results in the differential equation $- g_{\mathcal{B}} (\theta_t) = \frac{d\theta}{dt} + \frac{\eta}{2} \frac{d^2 \theta}{dt^2} + O(\eta^2)$.
Notice that in the limit as $\eta \to 0$ we regain gradient flow.
For small $\eta$, $\eta \ll 1$, we obtain a modified version of gradient flow with an additional second-order term.  
This approach of modifying first-order differential equations with higher order temporal derivatives was applied by \cite{kovachki2019analysis} to modify momentum flow, capturing the harmonic motion of momentum.

\textbf{Convex quadratic loss.}
To illustrate modified flow, we will consider the trajectories of momentum $w_t$, momentum flow $\hat{w}(t)$, and modified momentum flow $\tilde{w}(t)$ on the convex quadratic loss $\mathcal{L}(w) = \frac{1}{2}w^\intercal A w$, where $A \succ 0$ is some positive definite matrix, as shown in Fig.~\ref{fig:visualizing-modified-equation-analysis}.
For a finite learning rate $\eta$, dampening $\alpha=0$, momentum constant $\beta$, and initial conditions $v_0=0, w_0$, then momentum is given by the pair of recursive update equations $v_{t+1} = \beta v_t + Aw_t$ and $w_{t + 1} = w_t - \eta v_{t+1}$.
Momentum flow is defined as $(1 - \beta)\frac{d}{dt}\hat{w} = -A\hat{w}$, a linear first-order differential equation. 
For a given initialization $w_0$, the resulting initial value problem can be solved exactly as in the case of gradient flow.
Modified momentum flow is defined as $\frac{\eta}{2}(1 + \beta)\frac{d^2}{dt^2}\tilde{w} + (1 - \beta)\frac{d}{dt}\tilde{w} = -A\tilde{w}$, a linear second-order differential equation. 
For a given initialization $w_0$ and the assumed initial condition $\frac{d}{dt}\tilde{w}(0) = 0$, then the resulting initial value problem can be solved exactly as a system of damped harmonic oscillators.

\section{Deriving the Exact Learning Dynamics of SGD}
\label{apppendix:sgd-dynamics}

We consider a continuous model of SGD without momentum incorporating weight decay (equation~\ref{eq:decay-gradient-flow}) and modified loss (equation~\ref{eq:modified-loss}), such that
$$
\widetilde{\mathcal{L}} = \mathcal{L}_\lambda + \frac{\eta}{4} |\nabla \mathcal{L}_\lambda|^2
$$
where $\mathcal{L}_\lambda = \mathcal{L} + \frac{\lambda}{2} |\theta|^2$ is the regularized loss.
The gradient of the modified loss is,
$$
\nabla\widetilde{\mathcal{L}} = \nabla \mathcal{L}_\lambda + \frac{\eta}{2} H_\lambda \nabla \mathcal{L}_\lambda
= \left(1 + \frac{\eta \lambda}{2}\right)g + \left(\lambda + \frac{\eta \lambda^2}{2}\right) \theta + \frac{\eta}{2}\left(Hg + \lambda H \theta\right),
$$
where we used $\nabla \mathcal{L}_\lambda = g + \lambda \theta$ and $H_\lambda = H + \lambda I$.
Thus, the equation of learning we consider is
\begin{equation}
    \label{eq:modified-equation-learning}
    \frac{d\theta}{dt} = - \nabla \widetilde{\mathcal{L}} (\theta) = -\left(1 + \frac{\eta \lambda}{2}\right)g - \left(\lambda + \frac{\eta \lambda^2}{2}\right) \theta - \frac{\eta}{2}\left(Hg + \lambda H \theta\right).
\end{equation}
To incorporate the effect of stochasticity (equation~\ref{eq:stochastic-gradient-flow}) in this equation of learning we could replace the full-batch gradient $g$ and Hessian $H$ with their stochastic batch counterparts $\hat{g}_{\mathcal{B}}$ and $\hat{H}_{\mathcal{B}}$ respectively.
However, careful treatment of these terms using stochastic calculus is needed when integrating the resulting stochastic differential equation, which we discuss at the end of this section.

\textbf{Translation dynamics.}
In the case of parameters with translation symmetry, the effect of discretization essentially leaves the dynamics for the constant of learning unchanged.
Combining the geometric properties of gradient ($\langle g, \mathbb{1}_{\mathcal{A}} \rangle = 0$) and Hessian ($H\mathbb{1}_{\mathcal{A}}=0$) introduced by translation symmetry with the equation of learning (equation~\ref{eq:modified-equation-learning}) gives the differential equation, 
\begin{equation}
    \label{eq:translation-sgd-ode}
    \left \langle \frac{d\theta}{dt} + \nabla \widetilde{\mathcal{L}}, \mathbb{1}_\mathcal{A} \right \rangle =
    \left(\lambda + \frac{d}{dt} \right)
    \langle \theta, \mathbb{1}_\mathcal{A} \rangle = 0,
\end{equation}
where we used the simplification,
\begin{align*}
\langle \nabla \widetilde{\mathcal{L}}, \mathbb{1}_\mathcal{A} \rangle 
&= \left(1+\frac{\eta \lambda}{2}\right) \overbrace{ \cancel{\langle g, \mathbb{1}_\mathcal{A} \rangle} }^{\mathclap{ \langle g, \mathbb{1}_\mathcal{A} \rangle = 0}} 
+ \left(\lambda + \frac{\eta \lambda^2}{2} \right) \langle \theta, \mathbb{1}_\mathcal{A} \rangle 
+ \frac{\eta}{2} ( 
\overbrace{ \cancel{\langle H g, \mathbb{1}_\mathcal{A} \rangle} }^{H \mathbb{1}_\mathcal{A} = 0}
+ \lambda \overbrace{ \cancel{\langle H \theta, \mathbb{1}_\mathcal{A} \rangle} }^{H \mathbb{1}_\mathcal{A} = 0})\\
&=
\left(\lambda + \cancel{\frac{\eta \lambda^2}{2}} \right) \langle \theta, \mathbb{1}_\mathcal{A} \rangle,
\end{align*}
and ignored the $O(\eta \lambda^2)$ term as we set the weight decay constant $\lambda$ to be as small as the learning rate $\eta$ in practice.
The solution to this differential equation is 
$$\langle \theta(t), \mathbb{1}_\mathcal{A} \rangle = e^{-\lambda t} \langle \theta(0), \mathbb{1}_\mathcal{A} \rangle.$$

\textbf{Scale dynamics.}
In the case of parameters with scale symmetry, the effect of discretization does distort the original geometry.
A finite learning rate leads to a centrifugal force that monotonically increases the previously conserved quantity ($|\theta|^2$), while weight decay acts as a centripetal force decreasing the quantity.
Combining the geometric constraints on the gradient ($\langle g, \theta_\mathcal{A}  \rangle = 0$) and Hessian ($H\theta_\mathcal{A} = -g_\mathcal{A}$) introduced by scale symmetry with the equation of learning (equation~\ref{eq:modified-equation-learning}) gives the following differential equation,
\begin{equation}
    \label{eq:scale-sgd-ode}
    \left \langle \frac{d\theta}{dt} + \nabla \widetilde{\mathcal{L}}, \theta_\mathcal{A} \right \rangle = 
    \left(\lambda + \frac{1}{2}\frac{d}{dt} \right)|\theta_\mathcal{A}|^2 - \frac{\eta}{2} \left| g_\mathcal{A} \right|^2 = 0,
\end{equation}
where we used the simplifications $\langle\frac{d\theta}{dt}, \theta_\mathcal{A} \rangle = \frac{1}{2}\frac{d}{dt}|\theta_\mathcal{A}|^2$,
\begin{align*}
\langle \nabla \widetilde{\mathcal{L}}, \theta_\mathcal{A} \rangle 
&= \left(1+\frac{\eta \lambda}{2}\right) \overbrace{ \cancel{\langle g, \theta_\mathcal{A} \rangle} }^{\mathclap{ \langle g, \theta_\mathcal{A} \rangle = 0}} 
+ \left(\lambda + \frac{\eta \lambda^2}{2} \right) \langle \theta, \theta_\mathcal{A} \rangle 
+ \frac{\eta}{2} ( 
\overbrace{ \langle H g, \theta_\mathcal{A} \rangle }^{ \langle g, H \theta_\mathcal{A} \rangle = - |g^2|}
+ \lambda \overbrace{ \cancel{\langle H \theta, \theta_\mathcal{A} \rangle} }^{ -\langle g, \theta_\mathcal{A} \rangle = 0})\\
&=
\left(\lambda + \cancel{\frac{\eta \lambda^2}{2}} \right) |\theta_\mathcal{A}|^2 
-
\frac{\eta}{2} |g_\mathcal{A}|^2,
\end{align*}
and ignored the $O(\eta \lambda^2)$ term.
The solution to this differential equation is
\begin{equation*}
|\theta_\mathcal{A}(t)|^2  = e^{- 2 \lambda t} |\theta_\mathcal{A}(0)|^2 + \eta \int_0^t e^{-2\lambda (t-\tau)} \left| g_\mathcal{A} \right|^2 d\tau.
\end{equation*}

\textbf{Rescale dynamics.}
In the case of parameters with rescale symmetry, the effect of discretization also distorts the original geometry.
However, unlike in the sphere, the force originating from discretization can both increase or decrease the previously conserved quantity ($|\theta_{\mathcal{A}_1} (t)|^2 - |\theta_{\mathcal{A}_2} (t)|^2$).
Combining the geometric properties of gradient ($\langle g, \theta_{\mathcal{A}_1} \rangle - \langle g, \theta_{\mathcal{A}_2} \rangle = 0$) and Hessian ($H \theta_{\mathcal{A}_1} - H \theta_{\mathcal{A}_2} + g_{\mathcal{A}_1} - g_{\mathcal{A}_2} =0$) introduced by rescale symmetry with the equation of learning (equation~\ref{eq:modified-equation-learning}) gives the following differential equation,
\begin{equation}
    \label{eq:rescale-sgd-ode}
    \left \langle \frac{d\theta}{dt} + \nabla \widetilde{\mathcal{L}}, \theta_{\mathcal{A}_1} \right \rangle - \left \langle \frac{d\theta}{dt} + \nabla \widetilde{\mathcal{L}}, \theta_{\mathcal{A}_2} \right \rangle 
    = \left(\lambda + \frac{1}{2}\frac{d}{dt} \right)\left(|\theta_{\mathcal{A}_1} (t)|^2 - |\theta_{\mathcal{A}_2} (t)|^2\right) 
    - \frac{\eta}{2} \left(\left| g_{\mathcal{A}_1} \right|^2 - \left| g_{\mathcal{A}_2} \right|^2\right)
    =0,
\end{equation}
where we used the simplification,
\begin{equation*}
\begin{split}
\langle \nabla_{\theta} \widetilde{\mathcal{L}}, \theta_{\mathcal{A}_1} \rangle  - \langle \nabla \widetilde{\mathcal{L}}, \theta_{\mathcal{A}_2} \rangle
&=
\left(1+\frac{\eta \lambda}{2}\right)( \overbrace{  \cancel{\langle g, \theta_{\mathcal{A}_1} \rangle - \langle g, \theta_{\mathcal{A}_2} \rangle}  }^{ \langle g, \theta_{\mathcal{A}_1} \rangle - \langle g, \theta_{\mathcal{A}_2} \rangle = 0} )
+ \left(\lambda + \frac{\eta \lambda^2}{2} \right) (|\theta_{\mathcal{A}_1}|^2 - |\theta_{\mathcal{A}_2}|^2)
\\
&+ \frac{\eta}{2}
\left(
\overbrace{\langle g, H \theta_{\mathcal{A}_1} - H \theta_{\mathcal{A}_2}\rangle}^{ \langle g, -g_{\mathcal{A}_1} + g_{\mathcal{A}_2 }\rangle  = - |g_{\mathcal{A}_1}|^2 + |g_{\mathcal{A}_2}|^2 }
+
\lambda \overbrace{\cancel{\langle \theta, H \theta_{\mathcal{A}_1} - H \theta_{\mathcal{A}_2} } \rangle}^{-\langle g, \theta_{\mathcal{A}_1} \rangle + \langle g, \theta_{\mathcal{A}_2} \rangle = 0}
\right)\\
&=
\left(\lambda + \cancel{\frac{\eta \lambda^2}{2}} \right) (|\theta_{\mathcal{A}_1}|^2 - |\theta_{\mathcal{A}_2}|^2) - \frac{\eta}{2} (|g_{\mathcal{A}_1}|^2 - |g_{\mathcal{A}_2}|^2),
\end{split}
\end{equation*}
and ignored the $O(\eta \lambda^2)$ term.
This is the same differential equation as in equation~(\ref{eq:scale-sgd-ode}), just with a different forcing term.
Thus, the solution to this differential equation is
\begin{equation*}
|\theta_{\mathcal{A}_1} (t)|^2 -  |\theta_{\mathcal{A}_2} (t)|^2 = e^{- 2 \lambda t} (|\theta_{\mathcal{A}_1} (0)|^2 - |\theta_{\mathcal{A}_2} (0)|^2) + \eta \int_0^t e^{-2\lambda (t-\tau)}  \left(\left| g_{\mathcal{A}_1} \right|^2 - \left| g_{\mathcal{A}_2} \right|^2\right)
d\tau.
\end{equation*}

\clearpage
\section{Modeling Stochasticity}
\label{appendix:stochasticity}

Stochastic gradients $\hat{g}_{\mathcal{B}}(\theta)$ arise when we consider a batch $\mathcal{B}$ of size $S$ drawn uniformly from the indices $\{1,\dots,N\}$ forming the unbiased gradient estimate $\hat{g}_{\mathcal{B}}(\theta) = \frac{1}{S}\sum_{i\in\mathcal{B}}\nabla\ell(\theta, x_i)$. 
When the batch size is much smaller than the size of the dataset, $S \ll N$, then we can model the batch gradient as an average of $S$ i.i.d. samples from a noisy version of the true gradient $g(\theta)$.
Using the central limit theorem, we assume $\hat{g}_{\mathcal{B}}(\theta) - g(\theta)$ is a Gaussian random variable with mean $\mu = 0$ and covariance matrix $\Sigma(\theta) = \frac{1}{S}G(\theta)G(\theta)^\intercal$.
Under this assumption, the stochastic gradient update can be written as $\theta^{(n+1)} = \theta^{(n)} - \eta g(\theta^{(n)}) + \frac{\eta}{\sqrt{S}}G(\theta)\xi$, where $\xi$ is a standard normal random variable. 
This update is an Euler-Maruyama discretization with step size $\eta$ of the stochastic differential equation
\begin{equation}
    \label{eq:stochastic-gradient-flow}
    d\theta = -g(\theta)dt + \sqrt{\frac{\eta}{S}}G(\theta)dW_t,
\end{equation}
where $W_t$ is a standard Wiener process.
Equation~(\ref{eq:stochastic-gradient-flow}) has been derived as a model for SGD in many previous works \cite{mandt2015continuous}.
In order to simplify the analysis, many of these works have then made additional assumptions on the covariance matrix $\Sigma(\theta) = G(\theta)G(\theta)^\intercal$, such as $\Sigma(\theta) = H(\theta)$ where $H(\theta)$ is the Hessian matrix \cite{jastrzkebski2017three}, $\Sigma(\theta) = C$ where $C$ is some constant matrix \cite{mandt2015continuous}, and $\Sigma(\theta) = I$ where $I$ is the identity matrix \cite{chaudhari2018stochastic}.
However, without \emph{any} additional assumptions, the differential symmetries intrinsic to neural network architectures add fundamental constraints on $\Sigma$.

As we showed in section~\ref{sec:symmetry}, the gradient of the loss, regardless of the batch, is orthogonal to the generator vector field $\partial_\alpha \psi\vert_{\alpha=I}$ associated with a symmetry.
This implies the stochastic noise must also observe the same property, $\langle -\tfrac{1}{\sqrt{S}}G(\theta) \xi, \partial_\alpha \psi\vert_{\alpha=I} \rangle = 0$. 
In order for this relationship to hold for arbitrary noise\footnote{We do not need to assume the noise is Gaussian in order for this property to be true. However, we adopt this commonly accepted assumption to contextualize our work within the literature modeling SGD with an SDE.} $\xi$, then $G(\theta)^\intercal \partial_\alpha \psi\vert_{\alpha=I} =0$.
In other words, the differential symmetry inherent in neural network architectures projects the noise introduced by stochastic gradients onto low rank subspaces. 

\textbf{Scale Symmetry with Stochasticity.}
In the previous section, we performed our analysis using continuous-time model with deterministic gradients to facilitate calculations, and replaced them with stochastic batch gradients upon discretization when we evaluated the results empirically.
Instead, we can also directly model the stochastic noise arising from batch gradient with an It\^o process to perform analogous analysis in continuous-time as pointed out by \cite{li2021validity} in a recent follow-up work.

First, we assume that the time evolution of network parameters $\theta(t)$ during SGD training is described by the It\^o process $d\theta = \mu dt + \sqrt{\frac{\eta}{S}} G(\theta) dW_t$, where 
\begin{equation*}
    \mu = - g - \lambda \theta - \frac{\eta}{2} (Hg + \lambda H \theta) + O(\eta^2) + O(\eta \lambda) + O(\lambda^2).
\end{equation*}
Assuming a set of parameters $\mathcal{A}$ respect scale symmetry, then applying It\^o's lemma to the function $|\theta_{\mathcal{A}}(t)|^2$ yields the It\^o process,
\begin{equation*}
    \begin{split}
    d |\theta_{\mathcal{A}}(t)|^2
    &=
    \left\{
    2 \theta_{\mathcal{A}}^{\intercal} \mu + \frac{\eta}{S} \Tr[G(\theta_{\mathcal{A}})^T G(\theta_{\mathcal{A}})]
    \right\}dt
    +
    2 \sqrt{\frac{\eta}{S}} \theta^{\intercal} G(\theta_{\mathcal{A}}) dW_t \\
    &=
    \left\{
    -2 \lambda |\theta_{\mathcal{A}}|^2 - \eta |g_{\mathcal{A}}|^2 + \frac{\eta}{S} \Tr[G(\theta_{\mathcal{A}})^T G(\theta_{\mathcal{A}})]
    \right\}dt.
    \end{split}
\end{equation*}
Notice this is a deterministic ODE equivalent to the previously derived ODE with an additional forcing term accounting for the variance of the noise.
We can perform an analogous analysis for the case of translation and rescale symmetry.

We can also consider the effect of stochasticity without the complexity of stochastic calculus by considering the dynamics in the discrete setting.
As explained in appendix~\ref{appendix:discrete-setting}, this is possible for the case without momentum, but becomes much more complicated once we consider momentum as well.

\clearpage
\section{Deriving the Exact Learning Dynamics of SGD with Momentum}
\label{appendix:momentum-dynamics}
We consider a continuous model of SGD with momentum incorporating weight decay (equation~\ref{eq:decay-gradient-flow}), momentum (equation~\ref{eq:momentum-gradient-flow}), stochasticity (equation~\ref{eq:stochastic-gradient-flow}), and modified flow (equation~\ref{eq:modified-flow}).
As discussed in appendix~\ref{appendix:learning-rules}, we can model the effect of momentum by considering the forward Euler discretization $\tfrac{\theta_{k+1} - \theta_k}{\eta(1 - \alpha)}$ and the backward Euler discretization $-\beta\tfrac{\theta_{k} - \theta_{k-1}}{\eta(1 - \alpha)}$.
These terms introduce the numerical artifacts $\tfrac{\eta(1 - \alpha)}{2}\frac{d^2}{dt^2}\theta$ and $\tfrac{\eta(1 - \alpha)}{2}\beta\frac{d^2}{dt^2}\theta$ respectively, as explained by the modified flow analysis in appendix~\ref{appendix:modified-eq-analysis}.  
Incorporating these elements with weight decay gives the equation of learning,
$$\frac{\eta(1 - \alpha)}{2}(1 + \beta)\frac{d^2}{dt^2}\theta + (1 -\beta)\frac{d}{dt}\theta + \lambda \theta = -g(\theta).$$
Incorporating the effect of stochasticity into this equation of learning gives the Langevin equation,
\begin{equation}
    \label{eq:langevin-equation}
    \begin{split}
        \frac{\eta(1 - \alpha)}{2}(1 + \beta)dv &= -(1 -\beta)vdt - \lambda \theta dt  -g(\theta)dt + \sqrt{\frac{\eta}{S}}G(\theta)dW_t,\\
        d\theta &= vdt,
    \end{split}
\end{equation}
where $W_t$ is a standard Weiner process.
Compared to the modified loss route (described in the previous section), it is much more natural and simple to account for the effect of stochasticity with modified flow.

\textbf{Translation dynamics.}
Combining the geometric properties of gradient ($\langle g, \mathbb{1}_{\mathcal{A}} \rangle = 0$), Hessian ($H\mathbb{1}_{\mathcal{A}}=0$), and stochasticity ($G(\theta)^\intercal \mathbb{1}_{\mathcal{A}}=0$) introduced by translation symmetry with the Langevin equation of learning (equation~\ref{eq:langevin-equation}) gives the differential equation, 
\begin{equation}
    \label{eq:translation-momentum-ode}
    \left( \frac{\eta(1 - \alpha)}{2}(1 + \beta) \frac{d^2}{dt^2} + (1 - \beta)\frac{d}{dt} + \lambda \right)
    \langle \theta, \mathbb{1}_{\mathcal{A}} \rangle = 0.
\end{equation}
This is the differential equation for a harmonic oscillator with $\gamma = \tfrac{1 - \beta}{\eta(1 - \alpha)(1 + \beta)}$ and $\omega = \sqrt{\tfrac{2\lambda}{\eta (1 - \alpha) (1 + \beta)}}$.
Assuming the initial condition $\left.\tfrac{d}{dt}\langle \theta(t), \mathbb{1}_{\mathcal{A}} \rangle\right\vert_{t=0} = 0$, then the general solution (as derived in appendix~\ref{appendix:ODE-solutions}) is
\begin{equation}
    \label{eq:translation-momentum-equation}
    \langle \theta(t), \mathbb{1}_{\mathcal{A}} \rangle =
    \begin{cases}
         e^{-\gamma t}\left(\cosh\left(\sqrt{\gamma^2 - \omega^2}t \right) + \tfrac{\gamma}{\sqrt{\gamma^2 - \omega^2}}\sinh\left(\sqrt{\gamma^2 - \omega^2}t \right)\right)\langle \theta(0), \mathbb{1}_{\mathcal{A}} \rangle &\gamma > \omega\\
        e^{-\gamma t}(1 + \gamma t)\langle \theta(0), \mathbb{1}_{\mathcal{A}} \rangle &\gamma = \omega\\
        e^{-\gamma t}\left(\cos\left(\sqrt{\omega^2 - \gamma^2}t \right) + \tfrac{\gamma}{\sqrt{\omega^2 - \gamma^2}}\sin\left(\sqrt{\omega^2 - \gamma^2}t \right)\right)\langle \theta(0), \mathbb{1}_{\mathcal{A}} \rangle &\gamma < \omega
    \end{cases}
\end{equation}

\textbf{Scale dynamics.}
Combining the geometric constraints on the gradient ($\langle g, \theta_\mathcal{A}  \rangle = 0$), Hessian ($H\theta_\mathcal{A} = -g_\mathcal{A}$), and stochasticity ($G(\theta_{\mathcal{A}})^\intercal \theta_\mathcal{A}=0$) introduced by scale symmetry with the Langevin equation of learning (equation~\ref{eq:langevin-equation}) gives the following differential equation\footnote{The derivation of this ODE uses $\langle \tfrac{d^2 \theta_{\mathcal{A}}}{dt^2}, \theta_{\mathcal{A}} \rangle = \frac{1}{2}\tfrac{d^2}{dt^2} |\theta_{\mathcal{A}}|^2 - |\tfrac{d\theta_{\mathcal{A}}}{dt}|^2$.},

\begin{equation}
    \label{eq:scale-momentum-ode}
    \left( \frac{\eta(1 - \alpha)(1 + \beta)}{4} \frac{d^2}{dt^2} + \frac{(1 - \beta)}{2}\frac{d}{dt} + \lambda \right)|\theta_{\mathcal{A}}|^2 = \frac{\eta(1 - \alpha)(1 + \beta)}{2} \left| \frac{d\theta_{\mathcal{A}}}{dt} \right|^2,
\end{equation}
This is the differential equation for a driven harmonic oscillator with $\gamma = \tfrac{1-\beta}{\eta (1 - \alpha)(1 + \beta)}$, $\omega = \sqrt{\tfrac{4\lambda}{\eta (1 - \alpha)(1 + \beta)}}$, and $f(t) = 2 \left| \frac{d\theta_{\mathcal{A}}}{dt} \right|^2$.
Assuming the initial condition $\left.\tfrac{d}{dt}|\theta_{\mathcal{A}}|^2\right\vert_{t=0} = 0$, then the general solution (derived in appendix~\ref{appendix:ODE-solutions}) is
\begin{equation}
    \label{eq:scale-momentum-equation-inhomogenous}
    |\theta_{\mathcal{A}}(t)|^2  =
    \begin{cases}
        |\theta_{\mathcal{A}}(t)|^2_h + \int_0^t e^{-\gamma (t-\tau)}\left(\frac{\sinh\left(\sqrt{\gamma^2 - \omega^2}(t-\tau)\right)}{\sqrt{\gamma^2 - \omega^2}}\right) 2\left|\frac{d\theta_{\mathcal{A}}}{dt}(\tau)\right|^2 d\tau &\gamma > \omega\\
        |\theta_{\mathcal{A}}(t)|^2_h + \int_0^t e^{-\gamma (t-\tau)}(t-\tau)2\left|\frac{d\theta_{\mathcal{A}}}{dt}(\tau)\right|^2 d\tau &\gamma = \omega\\
        |\theta_{\mathcal{A}}(t)|^2_h + \int_0^t e^{-\gamma (t-\tau)}\left(\frac{\sin\left(\sqrt{\omega^2 - \gamma^2}(t-\tau)\right)}{\sqrt{\omega^2 - \gamma^2}}\right) 2\left|\frac{d\theta_{\mathcal{A}}}{dt}(\tau)\right|^2 d\tau &\gamma < \omega
    \end{cases}
\end{equation}
where $|\theta_{\mathcal{A}}(t)|^2_h$ is the solution to the homogeneous harmonic oscillator
\begin{equation}
    \label{eq:scale-momentum-equation-homogenous}
    |\theta_{\mathcal{A}}(t)|^2_h  =
    \begin{cases}
         e^{-\gamma t}\left(\cosh\left(\sqrt{\gamma^2 - \omega^2}t \right) + \tfrac{\gamma}{\sqrt{\gamma^2 - \omega^2}}\sinh\left(\sqrt{\gamma^2 - \omega^2}t \right)\right)|\theta_{\mathcal{A}}(0)|^2 &\gamma > \omega\\
        e^{-\gamma t}(1 + \gamma t)|\theta_{\mathcal{A}}(0)|^2 &\gamma = \omega\\
        e^{-\gamma t}\left(\cos\left(\sqrt{\omega^2 - \gamma^2}t \right) + \tfrac{\gamma}{\sqrt{\omega^2 - \gamma^2}}\sin\left(\sqrt{\omega^2 - \gamma^2}t \right)\right)|\theta_{\mathcal{A}}(0)|^2 &\gamma < \omega
    \end{cases}
\end{equation}

\textbf{Rescale dynamics.}
Combining the geometric properties of gradient $\langle g, \theta_{\mathcal{A}_1} - \theta_{\mathcal{A}_2} \rangle = 0$, Hessian ($H (\theta_{\mathcal{A}_1} - \theta_{\mathcal{A}_2}) + g_{\mathcal{A}_1} - g_{\mathcal{A}_2}=0$), and stochasticity ($G(\theta_{\mathcal{A}_1})^\intercal \theta_{\mathcal{A}_1} - G(\theta_{\mathcal{A}_2})^\intercal \theta_{\mathcal{A}_2} = 0$) introduced by rescale symmetry with the Langevin equation of learning (equation~\ref{eq:langevin-equation}) gives the differential equation,

\begin{equation}
    \label{eq:rescale-momentum-ode}
    \left( \frac{\eta(1 - \alpha)(1 + \beta)}{4} \frac{d^2}{dt^2} + \frac{(1 - \beta)}{2}\frac{d}{dt} + \lambda \right)\left(|\theta_{\mathcal{A}_1}|^2 - |\theta_{\mathcal{A}_2}|^2\right) = \frac{\eta(1 - \alpha)(1 + \beta)}{2} \left(\left| \frac{d\theta_{\mathcal{A}_1}}{dt} \right|^2 - \left| \frac{d\theta_{\mathcal{A}_2}}{dt} \right|^2\right).
\end{equation}

This is the same harmonic oscillator given by equation~(\ref{eq:scale-momentum-ode}) with the different forcing term $f(t) = 2\left(|\tfrac{d\theta_{\mathcal{A}_1}}{dt}|^2 - |\tfrac{d\theta_{\mathcal{A}_2}}{dt}|^2\right)$.
The general solution is given by equation~(\ref{eq:scale-momentum-equation-inhomogenous}) and (\ref{eq:scale-momentum-equation-homogenous}) replacing $|\theta|^2$ and $|\tfrac{d\theta}{dt}|^2$ by $|\theta_{\mathcal{A}_1}|^2 -  |\theta_{\mathcal{A}_2}|^2$ and $|\tfrac{d\theta_{\mathcal{A}_1}}{dt}|^2 - |\tfrac{d\theta_{\mathcal{A}_2}}{dt}|^2$ respectively.

\section{General solutions for ODEs}
\label{appendix:ODE-solutions}

\subsection{Exponential Growth}

Here we will solve for the general solution of the homogenous first-order linear differential equation,
$$\left(\frac{d}{d t} + \lambda \right) x(t) = 0.$$
Assume a solution of the form $x(t) = e^{\alpha t}$. 
Plugging this in gives the auxiliary equation $\alpha + \lambda = 0$.
Thus, the general solution to the differential equation with initial condition $x(0)$ is
$$x(t) = e^{-\lambda t}x(0).$$
Now we will solve the inhomogenous differential equation, 
$$\left(\frac{d}{d t} + \lambda \right) x(t) = f(t).$$
Multiply both sides by $e^{\lambda t}$ and factor the left hand side using the product rule such that the differential equation simplies to $\frac{d}{dt}\left(e^{\lambda t x(t)}\right) = e^{\lambda t} f(t)$.
Integrate this equation and using the fundamental theorem of calculus rearrange to get the solution
$$x(t) = e^{-\lambda t}x(0) + \int_0^t e^{-\lambda (t - \tau)}f(\tau)d\tau.$$

\subsection{Harmonic Oscillator}

Here we will solve the general solution for a harmonic oscillator,
$$\left(\frac{d^2}{d t^2} + 2\gamma\frac{d}{d t} + \omega^2 \right) x(t) = 0.$$

Assume a solution of the form $x(t) = e^{\alpha t}$. 
Plugging this in gives the auxiliary equation $\alpha^2 + 2\gamma\alpha + \omega^2 =0$ with solutions $\alpha_{\pm} = -\gamma \pm \sqrt{\gamma^2 - \omega^2}$.
Thus, the general solution to the oscillator equation with initial conditions $x(0)$ and $\frac{dx}{dt}(0) = 0$ is
$$x(t) = e^{-\gamma t}\left(C_1 e^{\sqrt{\gamma^2 - \omega^2}t} + C_2 e^{-\sqrt{\gamma^2 - \omega^2}t}\right)$$
where $C_1, C_2$ are constants 
$$C_1 = \frac{\gamma + \sqrt{\gamma^2 - \omega^2}}{2\sqrt{\gamma^2 - \omega^2}}x(0), \qquad C_2 = \frac{-\gamma + \sqrt{\gamma^2 - \omega^2}}{2\sqrt{\gamma^2 - \omega^2}}x(0).$$
Using hyperbolic functions the solution simplifies as 
$$x(t) = e^{-\gamma t}\left(\cosh\left(\sqrt{\gamma^2 - \omega^2}t \right) + \frac{\gamma}{\sqrt{\gamma^2 - \omega^2}}\sinh\left(\sqrt{\gamma^2 - \omega^2}t \right)\right)x(0).$$
The form of this general solution implicitly assumes $\gamma > \omega$, the overdamped setting.  
When $\gamma = \omega$, the critically damped setting, then the solution reduces to
$$x(t) = e^{-\gamma t}(C_1 + C_2t),$$
where $C_1 = x(0)$ and $C_2 = \gamma x(0)$.
When $\gamma < \omega$, the underdamped setting, then the solution reduces to
$$x(t) = e^{-\gamma t}\left(C_1\cos\left(\sqrt{\omega^2 - \gamma^2}t \right) + C_2\sin\left(\sqrt{\omega^2 - \gamma^2}t \right)\right),$$
where $C_1 = x(0)$ and $C_2 = \tfrac{\gamma}{\sqrt{\omega^2 - \gamma^2}}x(0)$.

\subsection{Driven Harmonic Oscillator}

Here we will solve the general solution for a driven harmonic oscillator,
$$\left(\frac{d^2}{d t^2} + 2\gamma\frac{d}{d t} + \omega^2 \right) x(t) = f(t).$$
First notice that if $x_h(t)$ is a solution to the homogenous harmonic oscillator and  $x_d(t)$ a specific solution to the driven harmonic oscillator, then 
$$x(t) = x_h(t) + x_d(t),$$
is the general solution to the driven harmonic oscillator.
We will use the Fourier transform to solve for $x_d(t)$.

Let $\hat{x}_d(\tau) = (\nabla \widetilde{\mathcal{L}}x_d)(\tau)$ and $\hat{f}(\tau) = (\nabla \widetilde{\mathcal{L}}f)(\tau)$ be the Fourier transforms of $x_d(t)$ and $f(t)$ respectively. 
Applying the Fourier transform to the driven harmonic oscillator equation and rearranging gives
$$\hat{x}_d(\tau) = \left(-\tau^2 + 2\gamma i\tau + \omega^2\right)^{-1}\hat{f}(\tau),$$
which implies by the inverse Fourier transform that $x_d(t)$ is the convolution,
$$x_d(t) = \left(G * f\right)(t),$$
where $G(t)$ is Green's function (the driven solution $x_d(t)$ for the dirac delta forcing function $\delta_0$),
$$G(t) = \Theta(t)\frac{e^{-\gamma t}}{2\sqrt{\gamma^2 - \omega^2}}\left(e^{\sqrt{\gamma^2 - \omega^2}t} - e^{-\sqrt{\gamma^2 - \omega^2}t}\right),$$
which again using hyperbolic functions simplifies as
$$G(t) = \Theta(t)e^{-\gamma t}\frac{\sinh\left(\sqrt{\gamma^2 - \omega^2}t\right)}{\sqrt{\gamma^2 - \omega^2}}.$$
This form of Green's function is again implicitly assuming $\gamma > \omega$.
When $\gamma = \omega$, the function simplifies to
$$G(t) = \Theta(t)e^{-\gamma t}t,$$
and when $\gamma < \omega$, the function simplifies to
$$G(t) = \Theta(t)e^{-\gamma t}\frac{\sin\left(\sqrt{\omega^2 - \gamma^2}t\right)}{\sqrt{\omega^2 - \gamma^2}}.$$
Noticing that both $G$ and $f$ are only supported on $[0,\infty)$, their convolution can be simplified and the general solution for the driven harmonic oscillator is
$$x(t) = x_h(t) + \int_0^t G(t - \tau)f(\tau)d\tau.$$

\section{Deriving Dynamics in the Discrete Setting}
\label{appendix:discrete-setting}

In section~\ref{sec:conservation} we identified certain parameter combinations associated with network symmetries that are conserved under gradient flow.
However, as we explained in section~\ref{sec:continous-model}, these conservation laws are not observed empirically.
To remedy this discrepancy we constructed more realistic continuous models for SGD, incorporating weight decay, momentum, stochasticity, and finite learning rates.
In section~\ref{sec:dynamics} we derived the exact dynamics for the parameter combinations under this more realistic setting, demonstrating near perfect alignment with the empirical dynamics. 
\emph{What would happen if we instead derived the dynamics for the parameter combinations directly in the discrete setting of SGD?}
Here, we will identify these discrete dynamics and discuss the relationship between the discrete equations and the continuous solutions.

Gradient descent with learning rate $\eta$ and weight decay constant $\lambda$ is given by the update equation
$\theta^{(n+1)} = (1 - \eta \lambda)\theta^{(n)} - \eta g(\theta^{(n)})$,
and initial condition $\theta^{(0)}$.
Using this update equation the sum of the parameters after $n+1$ steps can be ``unrolled'' as,
$$\langle\theta^{(n+1)},\mathbb{1}_{\mathcal{A}}\rangle = (1 - \eta \lambda)^{n+1}\langle\theta^{(0)},\mathbb{1}_{\mathcal{A}}\rangle  + \eta\sum_{i=0}^{n}(1 - \eta \lambda)^{n-i}\langle g(\theta^{(i)}),\mathbb{1}_{\mathcal{A}}\rangle.$$
Similarly, the squared Euclidean norm of the parameters after $n+1$ steps can be ``unrolled'' as,
$$|\theta_{\mathcal{A}}^{(n+1)}|^2 = (1 - \eta \lambda)^{2(n+1)}|\theta_{\mathcal{A}}^{(0)}|^2 + \eta^2\sum_{i=0}^n (1 - \eta\lambda)^{2(n-i)}|g_{\mathcal{A}} (\theta^{(i)})|^2 - 2\eta\sum_{i=0}^n (1 - \eta\lambda)^{2(n-i) + 1}\langle g (\theta^{(i)}), \theta_{\mathcal{A}}^{(i)}\rangle.$$
Combining these unrolled equations, with the gradient properties of symmetry discussed in section~\ref{sec:symmetry}, gives the discrete dynamics for the parameter combinations,
\begin{align}
    \label{eq:translation-gd-equation}
    \textbf{Translation:}&\quad\left\langle \theta^{(n)}, \mathbb{1}_\mathcal{A} \right\rangle = (1 - \eta\lambda)^{n} \left\langle \theta^{(0)}, \mathbb{1}_\mathcal{A} \right\rangle\\
    \label{eq:scale-gd-equation}
    \textbf{Scale:}&\quad\left|\theta_\mathcal{A}^{(n)}\right|^2  = (1 - \eta\lambda)^{2n} \left|\theta_\mathcal{A}^{(0)}\right|^2 + \eta^2 \sum_{i=0}^{n-1} (1 - \eta\lambda)^{2(n - 1 - i)}\left| g_\mathcal{A}^{(i)} \right|^2\\
    \label{eq:rescale-gd-equation}
    \textbf{Rescale:}&\quad\left|\theta_{\mathcal{A}_1}^{(n)}\right|^2 -  \left|\theta_{\mathcal{A}_2}^{(n)}\right|^2 = \\
    \nonumber
    &\quad (1 - \eta\lambda)^{2n} \left(\left|\theta_{\mathcal{A}_1} ^{(0)}\right|^2 - \left|\theta_{\mathcal{A}_2}^{(0)}\right|^2\right) + \eta^2 \sum_{i=0}^{n-1} (1 - \eta\lambda)^{2(n - 1 - i)}  \left(\left|g_{\mathcal{A}_1} ^{(i)}\right|^2 - \left|g_{\mathcal{A}_2}^{(i)}\right|^2\right)
\end{align}
Notice the striking similarity between the continuous equations~(\ref{eq:translation-sgd-equation}),~(\ref{eq:scale-sgd-equation}),~(\ref{eq:rescale-sgd-equation}) presented in section~\ref{sec:dynamics} and the discrete equations~(\ref{eq:translation-gd-equation}),~(\ref{eq:scale-gd-equation}),~(\ref{eq:rescale-gd-equation}). 
The exponential function with decay rate $\lambda$ from the continuous solutions are replaced by a power of the base $(1 - \eta\lambda)$ in the discrete setting.
The integral of exponentially weighted gradient norms from the continuous solutions are replaced by a Riemann sum of power weighted gradient norms in the discrete setting.
This is further confirmation that the continuous solutions we derived, and the modified gradient flow equation of learning used, well approximate the actual empirics. 
While the equations derived in the discrete setting remove any uncertainty about the exactness of the theoretical predictions, they provide limited qualitative understanding for the empirical learning dynamics.
This is especially true if we consider the learning dynamics with momentum.
In this setting, the process of ``unrolling'' is much more complicated and the harmonic nature of the empirics, easily derived in the continuous setting, is hidden in the discrete algebra.

\clearpage
\section{Experimental Details}
\label{appendix:experiments}

An open source version of our code, used to generate all the figures in this paper, is available at \href{https://github.com/danielkunin/neural-mechanics}{github.com/danielkunin/neural-mechanics}.

\textbf{Dataset.}
While we ran some initial experiments on Cifar-100, the dataset used in all the empirical figures in this documents was Tiny Imagenet. 
It is used for image categorization an consists of 100,000 training images at a resolution of $64\times 64$ spanning 200 classes. 

\textbf{Model.}
We use standard VGG-16 models for all out experiments with the following modifications:
\begin{itemize}
    \item The last three fully connected layers at the end have been adjusted for an input at the Tiny ImageNet resolution ($64\times64$) and thus consist of $2048, 2048$, and $200$ layers respectively. 
    \item In addition to the standard arrangement of conv layers for the VGG-16, we consider a variant where we add a batch normalization layer between every convolutional layer and its activation function. 
\end{itemize}

\textbf{Training hyperparameters.}
Certain hyperparameters were varied during training. 
Below we outline the combinations explored. 
All models were initialized using Kaiming Normal, and no learning rate drops or warmup were used. 

\begin{table}[H]
\tiny
\begin{tabular}{@{}r|c|c|c|c|c|c|c|c@{}}
\toprule
\textbf{Model} & \textbf{Dataset} & \textbf{Epochs} & \textbf{Batch size} & \textbf{Opt.} & \textbf{LR} & \textbf{Mom.} & \textbf{WD} & \textbf{Damp.} \\ 
\midrule
VGG-16 & Tiny ImageNet & 100 & 256 & SGD & $[0.1, 0.01]$ & - & $[0, 0.001, 0.0005, 0.0001]$& 0 \\
VGG-16 w/BN & Tiny ImageNet & 100 & 256 & SGD & $[0.1, 0.01]$ & - & $[0, 0.001, 0.0005, 0.0001]$& 0 \\
VGG-16 & Tiny ImageNet & 100 & 128 & SGDM & 0.1 & $[0, 0.9, 0.99]$ & $[0, 0.001, 0.0005, 0.0001]$  & 0 \\
VGG-16 w/BN & Tiny ImageNet & 100 & 128 & SGDM & 0.1 & $[0, 0.9, 0.99]$ & $[0, 0.001, 0.0005, 0.0001]$& 0 \\
\bottomrule
\end{tabular}
\caption{\textbf{Training hyperparameters.}
}
\label{table:hyperparams}
\end{table}

\textbf{Counting number of symmetries.}
Here we explain how to count the number of symmetries a VGG-16 model contains.
\begin{itemize}
    \item Scale symmetries appear once per channel at every layer preceding a batch normalization layer. 
    For our VGG-16 model with batch norm: $2\cdot 64+2\cdot 128+3\cdot256+3\cdot512+3\cdot512+3 = 4,227$.
    \item Rescale symmetries appear once per channel where there are afine transforms between layers as well as once per input neuron to a fully connected layer. 
    Note that the sizes of the fully connected layers depend on input image size and the number of classes. 
    For our VGG-16 model with and without batchnorm on Tiny ImageNet, this is: $(2\cdot64+2\cdot128+3\cdot256+3\cdot512+3\cdot512+3) +(2048 + 1024 + 1024) = 8,323$.
    \item Translation symmetries appear once per input value to the softmax function, which for the case of classification is always equal to the number of classes, plus the bias term. 
    For our case this is $200 +1 = 201$.
\end{itemize}

\begin{table}[H]
\begin{tabular}{@{}r|c|ccc@{}}
\toprule
& \# Params   & \# Scale  & \# Rescale & \# Translation   \\ 
\midrule
VGG-16 on Tiny ImageNet & 18,067,464 & 0 & 8,323 & 201 \\
VGG-16 w/BN on Tiny ImageNet & 18,075,912 & 4,227 & 8,323 & 201\\
\bottomrule
\end{tabular}
\caption{\textbf{Counting the number of symmetries.}
}
\label{table:params}
\end{table}

\textbf{Computing the theoretical predictions.}
Some of the expressions shown in equations~(\ref{eq:translation-sgd-equation}), (\ref{eq:scale-sgd-equation}), and (\ref{eq:rescale-sgd-equation}) in section~\ref{sec:dynamics} are not trivial to compute as they involve an integral of an exponentially weighted gradient term. 
To tackle this problem, we wrote custom optimizers in PyTorch with additional buffers to approximate the integral via a Riemann sum.
At every update step, the argument in the integral term was computed from the batch gradients, scaled appropriately, and was accumulated in the buffer. 
Note that the above sum needs to be scaled by the learning rate, which is the coarseness of the grid of this Riemann sum. 
Checkpoints of the model and optimizer states were stored at pre-defined frequencies during training. 
Our visualizations involve computing the right hand side of equations~(\ref{eq:translation-sgd-equation}), (\ref{eq:scale-sgd-equation}), and (\ref{eq:rescale-sgd-equation}) from the model states and the left hand side of the same equations from the integral buffers stored in the optimizer states as explained above. 
These two quantities are referred to as ``empirical'' and ``theoretical'' in the figures and are depicted with solid color lines and dotted lines, respectively.

\subsection{Additional Empirics}

In this work we made exact predictions for the dynamics of combinations of parameters during training with SGD.
Importantly, these predictions were made at the neuron level, but could be aggregated for each layer.
Here we plot our predictions at both layer and neuron levels for VGG-16 models trained on Tiny ImageNet.

\begin{figure}[H]
    \centering
    \includegraphics[width=0.8\columnwidth]{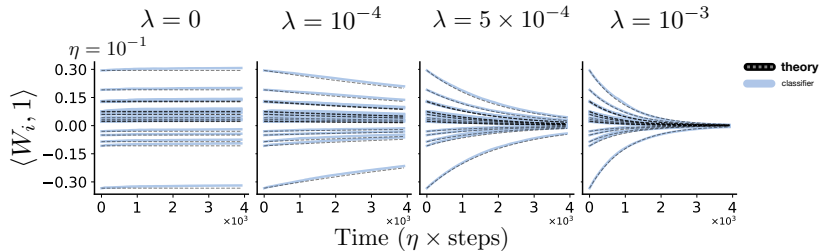}
    \caption{\textbf{The planar dynamics of VGG-16 on Tiny ImageNet.}
    We plot the column sum of the final linear layer of a VGG-16 model (without batch normalization) trained on Tiny ImageNet with SGD with learning rate $\eta = 0.1$, weight decay $\lambda \in \{0, 10^{-4}, 5 \times 10^{-4}, 10^{-3}\}$, and batch size $S = 256$.
    Colored lines are empirical column sums of the last layer through training and black dashed lines are the theoretical predictions of equation~(\ref{eq:translation-sgd-equation}).
    }
    \label{fig:planar}
\end{figure}

\begin{figure}[H]
    \centering
    \includegraphics[width=0.8\columnwidth]{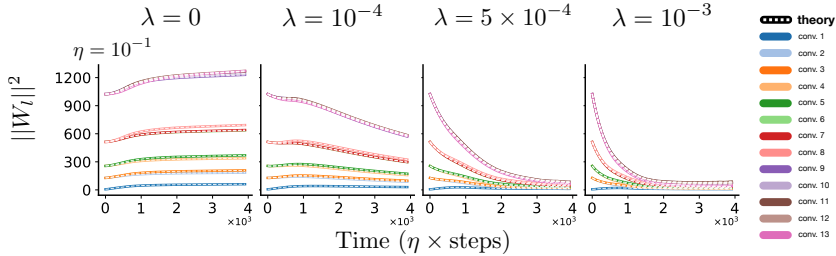}
    \caption{\textbf{The spherical dynamics of VGG-16 BN on Tiny ImageNet.}
    We plot the squared Euclidean norms for convolutional layers of a VGG-16 model (with batch normalization) trained on Tiny ImageNet with SGD with learning rate $\eta = 0.1$, weight decay $\lambda \in \{0, 10^{-4}, 5 \times 10^{-4}, 10^{-3}\}$, and batch size $S = 256$.
    Colored lines represent empirical layer-wise squared norms through training and the white dashed lines the theoretical predictions given by equation~(\ref{eq:scale-sgd-equation}). 
    }
    \label{fig:spherical}
\end{figure}

\begin{figure}[H]
    \centering
    \includegraphics[width=0.8\columnwidth]{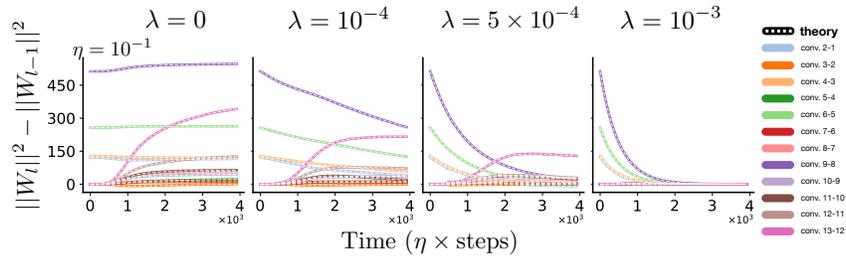}
    \caption{\textbf{The hyperbolic dynamics of VGG-16 on Tiny ImageNet.}
    We plot the difference between the squared Euclidean norms for consecutive convolutional layers of a VGG-16 model (without batch normalization) trained on Tiny ImageNet with SGD with learning rate $\eta = 0.1$, weight decay $\lambda \in \{0, 10^{-4}, 5 \times 10^{-4}, 10^{-3}\}$, and batch size $S = 256$.
    Colored lines represent empirical differences in consecutive layer-wise squared norms through training and the white dashed lines the theoretical predictions given by equation~(\ref{eq:rescale-sgd-equation}). 
    }
    \label{fig:hyperbolic}
\end{figure}

\begin{figure}[H]
    \centering
    \includegraphics[width=0.8\columnwidth]{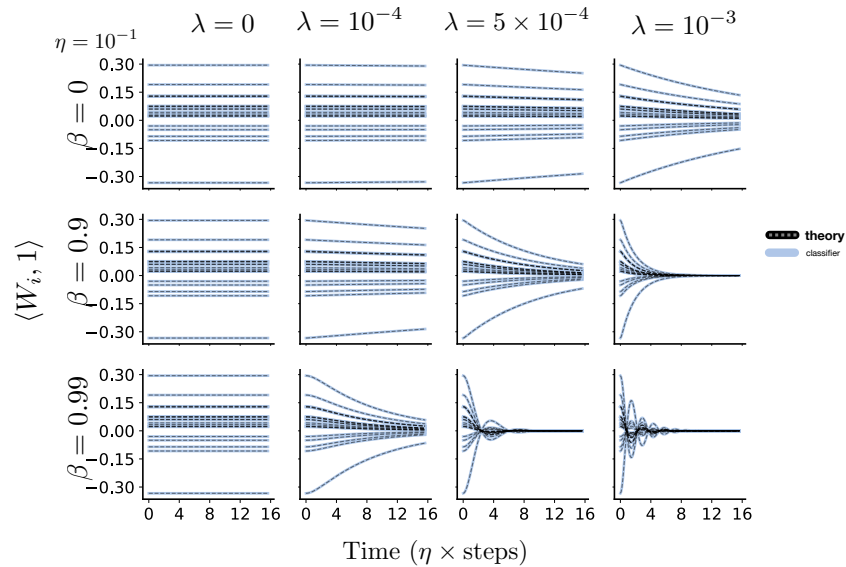}
    \caption{\textbf{The planar dynamics of Momentum on VGG-16 on Tiny ImageNet.}
    We plot the column sum of the final linear layer of a VGG-16 model (without batch normalization) trained on Tiny ImageNet with Momentum with learning rate $\eta = 0.1$, weight decay $\lambda \in \{0, 10^{-4}, 5 \times 10^{-4}, 10^{-3}\}$, momentum coefficient $\beta \in \{0, 0.9, 0.99\}$, and batch size $S = 128$.
    Colored lines are empirical column sums of the last layer through training and black dashed lines are the theoretical predictions of equation~(\ref{eq:translation-momentum-equation}).
    }
    \label{fig:planar-momentum}
\end{figure}

\begin{figure}[H]
    \centering
    \includegraphics[width=0.8\columnwidth]{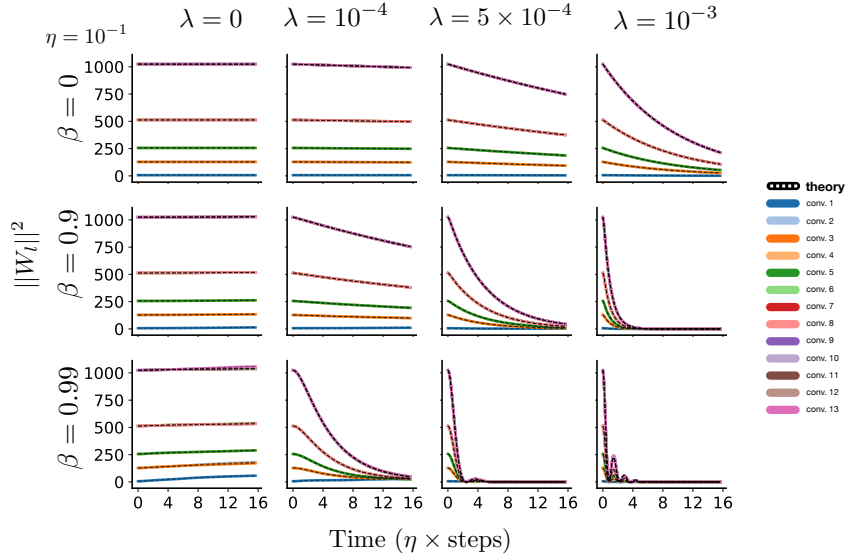}
    \caption{\textbf{The spherical dynamics of Momentum on VGG-16 on Tiny ImageNet.}
    We plot the squared Euclidean norms for convolutional layers of a VGG-16 model (with batch normalization) trained on Tiny ImageNet with Momentum with learning rate $\eta = 0.1$, weight decay $\lambda \in \{0, 10^{-4}, 5 \times 10^{-4}, 10^{-3}\}$, momentum coefficient $\beta \in \{0, 0.9, 0.99\}$, and batch size $S = 128$.
    Colored lines represent empirical layer-wise squared norms through training and the white dashed lines the theoretical predictions given by equation~(\ref{eq:scale-momentum-equation-inhomogenous}) and (\ref{eq:scale-momentum-equation-homogenous}) 
    }
    \label{fig:spherical-momentum}
\end{figure}

\begin{figure}[H]
    \centering
    \includegraphics[width=0.8\columnwidth]{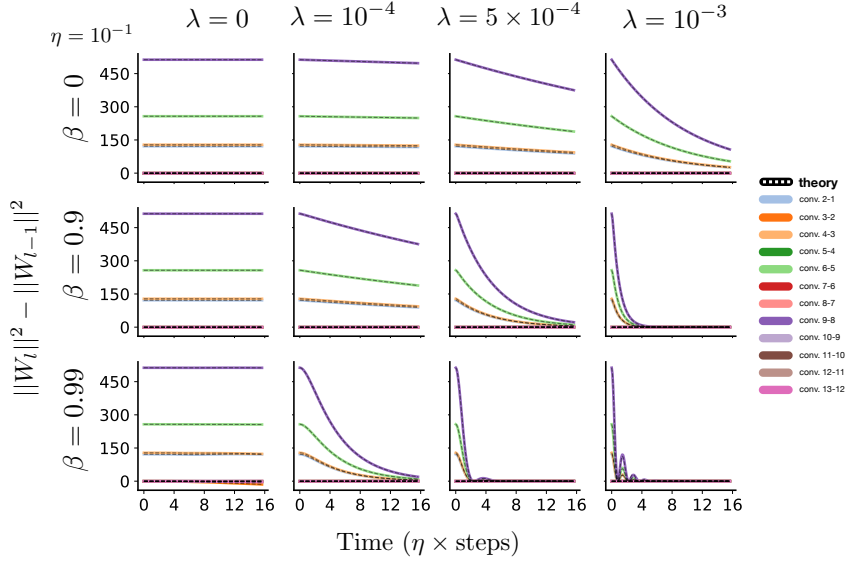}
    \caption{\textbf{The hyperbolic dynamics of Momentum on VGG-16 on Tiny ImageNet.}
    We plot the difference between the squared Euclidean norms for consecutive convolutional layers of a VGG-16 model (without batch normalization) trained on Tiny ImageNet with Momentum with learning rate $\eta = 0.1$, weight decay $\lambda \in \{0, 10^{-4}, 5 \times 10^{-4}, 10^{-3}\}$, momentum coefficient $\beta \in \{0, 0.9, 0.99\}$, and batch size $S = 128$.
    Colored lines represent empirical differences in consecutive layer-wise squared norms through training and the black dashed lines the theoretical predictions given by equation~(\ref{eq:scale-momentum-equation-inhomogenous}) and (\ref{eq:scale-momentum-equation-homogenous}) replacing $|\theta|^2$ and $|\tfrac{d\theta}{dt}|^2$ by $|\theta_{\mathcal{A}_1}|^2 -  |\theta_{\mathcal{A}_2}|^2$ and $|\tfrac{d\theta_{\mathcal{A}_1}}{dt}|^2 - |\tfrac{d\theta_{\mathcal{A}_2}}{dt}|^2$ respectively.
    }
    \label{fig:hyperbolic-momentum}
\end{figure}

\begin{figure}[H]
    \centering
    \includegraphics[width=0.85\columnwidth]{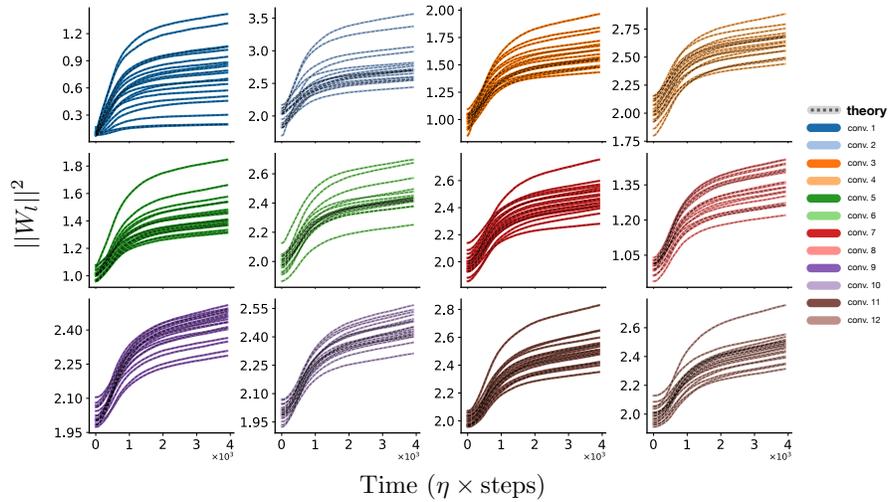}
    \caption{\textbf{The per-neuron spherical dynamics of SGD on VGG-16 BN on Tiny ImageNet.}
    We plot the per-neuron squared Euclidean norms for convolutional layers of a VGG-16 model (with batch normalization) trained on Tiny ImageNet with SGD with learning rate $\eta = 0.1$, weight decay $\lambda = 0$, and batch size $S = 256$.
    Colored lines represent empirical layer-wise squared norms through training and the black dashed lines the theoretical predictions given by equation~(\ref{eq:scale-sgd-equation}). 
    }
    \label{fig:spherical-neuron}
\end{figure}

\begin{figure}[H]
    \centering
    \includegraphics[width=0.85\columnwidth]{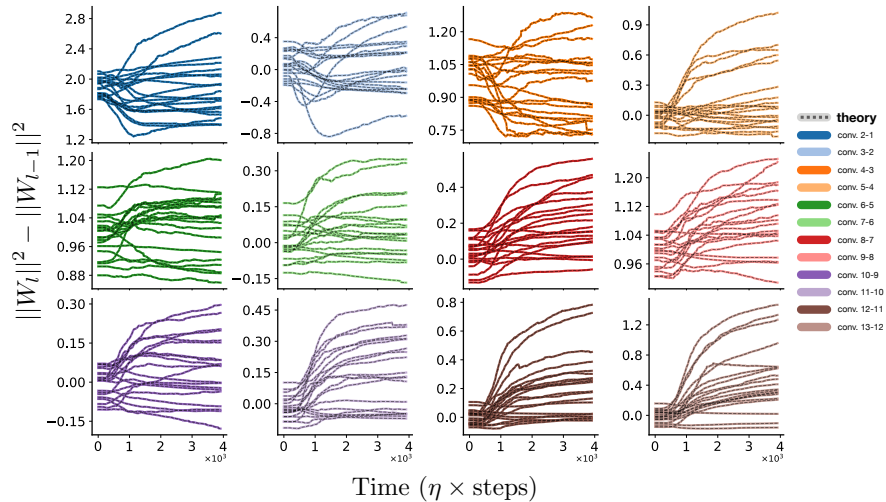}
    \caption{\textbf{The per-neuron hyperbolic dynamics of SGD on VGG-16 on Tiny ImageNet.}
    We plot the per-neuron difference between the squared Euclidean norms for consecutive convolutional layers of a VGG-16 model (without batch normalization) trained on Tiny ImageNet with SGD with learning rate $\eta = 0.1$, weight decay $\lambda = 0$, and batch size $S = 256$.
    Colored lines represent empirical differences in consecutive layer-wise squared norms through training and the black dashed lines the theoretical predictions given by equation~(\ref{eq:rescale-sgd-equation}). 
    }
    \label{fig:hyperbolic-neuron}
\end{figure}

\begin{figure}[H]
    \centering
    \includegraphics[width=\columnwidth]{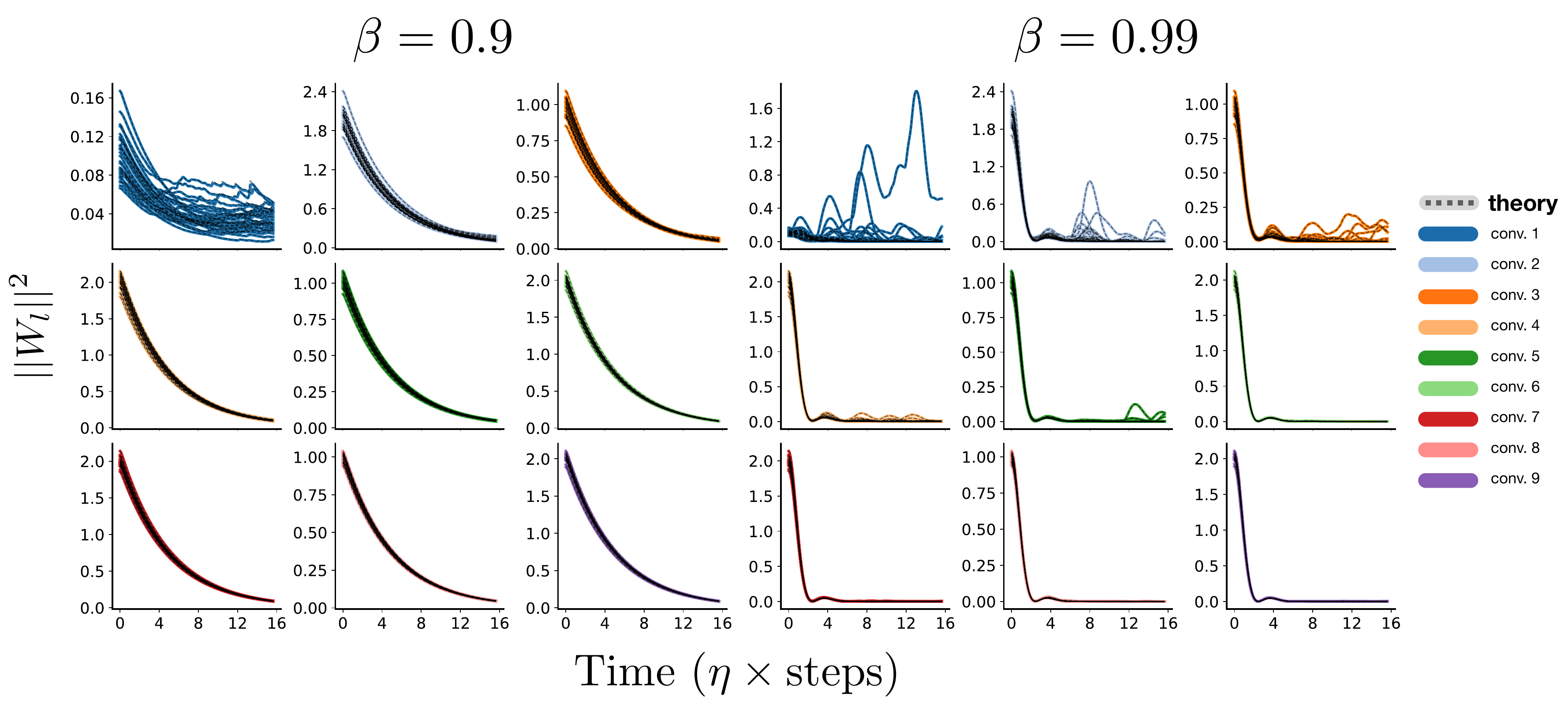}
    \caption{\textbf{The per-neuron spherical dynamics of Momentum on VGG-16 on Tiny ImageNet.}
    We plot the per-neuron squared Euclidean norms for convolutional layers of a VGG-16 model (with batch normalization) trained on Tiny ImageNet with Momentum with learning rate $\eta = 0.1$, weight decay $\lambda = 0$, momentum coefficient $\beta \in \{0.9, 0.99\}$, and batch size $S = 128$.
    Colored lines represent empirical layer-wise squared norms through training and the black dashed lines the theoretical predictions given by equation~(\ref{eq:scale-momentum-equation-inhomogenous}) and (\ref{eq:scale-momentum-equation-homogenous}).
    }
    \label{fig:spherical-neuron-momentum}
\end{figure}

\begin{figure}[H]
    \centering
    \includegraphics[width=\columnwidth]{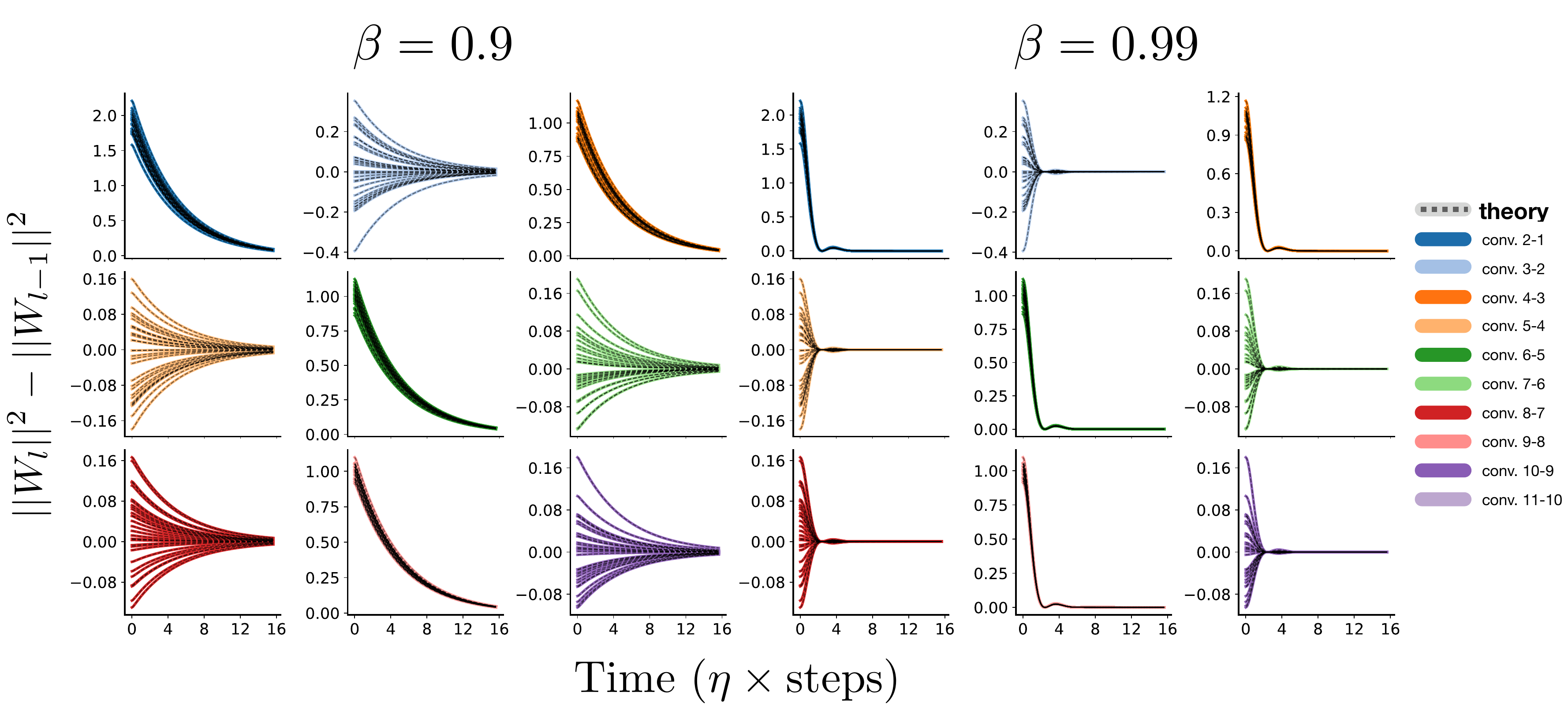}
    \caption{\textbf{The per-neuron hyperbolic dynamics of Momentum on VGG-16 on Tiny ImageNet.}
    We plot the per-neuron difference between the squared Euclidean norms for consecutive convolutional layers of a VGG-16 model (without batch normalization) trained on Tiny ImageNet with Momentum with learning rate $\eta = 0.1$, weight decay $\lambda = 0$, momentum coefficient $\beta \in \{0.9, 0.99\}$, and batch size $S = 128$.
    Colored lines represent empirical differences in consecutive layer-wise squared norms through training and the black dashed lines the theoretical predictions given by equation~(\ref{eq:scale-momentum-equation-inhomogenous}) and (\ref{eq:scale-momentum-equation-homogenous}) replacing $|\theta|^2$ and $|\tfrac{d\theta}{dt}|^2$ by $|\theta_{\mathcal{A}_1}|^2 -  |\theta_{\mathcal{A}_2}|^2$ and $|\tfrac{d\theta_{\mathcal{A}_1}}{dt}|^2 - |\tfrac{d\theta_{\mathcal{A}_2}}{dt}|^2$ respectively. 
    }
    \label{fig:hyperbolic-neuron-momentum}
\end{figure}

\end{document}